\newcommand{\doi}[1]{\href{https://doi.org/#1}{{doi: \urlstyle{rm}\nolinkurl{#1}}}}
\pgfplotsset{compat=1.16}
\newtheorem{theorem}{Theorem}
\newtheorem{proposition}[theorem]{Proposition}
\newtheorem{lemma}[theorem]{Lemma}
\theoremstyle{definition}
\newtheorem{definition}{Definition}
\newcommand{\R}{\ensuremath{\mathbb{R}}}
\newcommand{\N}{\ensuremath{\mathbb{N}}}
\newcommand{\indicator}[1]{\mathopen{\mathbb{I}}\left[#1\right]}
\newcommand{\floor}[1]{\left\lfloor #1 \right\rfloor}
\newcommand{\bit}[2]{\left\langle#2\right\rangle_{#1}}
\newcommand{\bitneg}{\mathbin\sim}
\renewcommand{\cong}[1]{\mathbin{\equiv_{#1}}}
\newcommand{\fixednums}{\ensuremath{\mathbb{F}}}
\newcommand{\extranums}{\ensuremath{\mathbb{E}}}
\newcommand{\intbits}{r}
\newcommand{\fracbits}{s}
\newcommand{\extradigit}{\text{extra}}
\newcommand{\extra}[1]{\bit{\extradigit}{#1}}
\newcommand{\fixed}[1]{\bit{\text{fixed}}{#1}}
\newcommand{\alphabet}{\Sigma}
\newcommand{\letter}{a}
\newcommand{\cls}{\texttt{CLS}}
\newcommand{\selfatt}{\ensuremath{\text{SA}}}
\newcommand{\ffnn}{\ensuremath{\text{FF}}}
\newcommand{\dffnn}{d_\text{FF}}
\newcommand{\dk}{d_\text{K}}
\newcommand{\layernorm}{\ensuremath{\text{LN}}}
\newcommand{\stack}{\ensuremath{\text{Stack}}}
\newcommand{\encoder}{\ensuremath{\text{Enc}}}
\newcommand{\layer}{\ensuremath{\operatorname{Layer}}}
\newcommand{\wq}{{W^\text{(Q)}}}
\newcommand{\wk}{{W^\text{(K)}}}
\newcommand{\wv}{{W^\text{(V)}}}
\newcommand{\wone}{{W^{(1)}}}
\newcommand{\wtwo}{{W^{(2)}}}
\newcommand{\bone}{{b^{(1)}}}
\newcommand{\btwo}{{b^{(2)}}}
\newcommand{\logic}{\ensuremath{\mathsf{FOC}[\mathord+;\mathsf{MOD}]}}
\newcommand{\isletter}[2]{Q_{#1}(#2)}
\newcommand{\congruent}[3]{\mathsf{MOD}^{#1}_{#2}(#3)}
\newcommand{\counteq}[1]{\exists^{{}=#1}}
\newcommand{\sentence}{\sigma}
\newcommand{\formula}{\phi}
\newcommand{\Chi}{X}
\newcommand{\true}{\top}
\newcommand{\false}{\bot}
\newcommand{\tczero}{\ensuremath{\mathsf{TC}^0}}
\begin{document}

\twocolumn[
  \icmltitle{Tighter Bounds on the Expressivity of Transformer Encoders}
  \begin{icmlauthorlist}
    \icmlauthor{David Chiang}{nd}
    \icmlauthor{Peter Cholak}{nd}
    \icmlauthor{Anand Pillay}{nd}
  \end{icmlauthorlist}
  \icmlaffiliation{nd}{University of Notre Dame, USA}
  \icmlcorrespondingauthor{David Chiang}{dchiang@nd.edu}
  \icmlkeywords{Keywords}
  \vskip0.3in
]
\printAffiliationsAndNotice{}

\begin{abstract}
Characterizing neural networks in terms of better-understood formal systems has the potential to yield new insights into the power and limitations of these networks.
Doing so for transformers remains an active area of research.
Bhattamishra and others have shown that transformer encoders are at least as expressive as a certain kind of counter machine, while Merrill and Sabharwal have shown that fixed-precision transformer encoders recognize only languages in
uniform $\tczero$.
We connect and strengthen these results by identifying a variant of first-order logic with counting quantifiers that is simultaneously an upper bound for fixed-precision transformer encoders and a lower bound for transformer encoders. This brings us much closer than before to an exact characterization of the languages that transformer encoders recognize.
\end{abstract}

\section{Introduction}

Characterizing neural networks in terms of better-understood formal systems has the potential to yield new insights into the power and limitations of these networks.
Recurrent neural networks (RNNs) were linked to finite automata from the start \citep{mcculloch+pitts:1943,kleene:1956} and have continued to be studied using finite automata (see, e.g., the survey by \citet{forcada+carrasco:2001}).
Convolutional neural networks, too, have been related to finite automata \citep{schwartz+:2018}.

Transformers \citep{vaswani+:2017} have been studied in relation to counter machines \citep{bhattamishra+:2020}, Boolean circuits \citep{hao-etal-2022-formal,merrill-etal-2022-saturated,merrill+sabharwal:2023log}, and programming languages \citep{weiss+:2021}, obtaining various upper and lower bounds on their expressivity. (\Cref{sec:related} gives a more detailed survey.)
As a lower bound, \citet{bhattamishra+:2020} show that transformer encoders are at least as powerful as \emph{simplified stateless counter machines} (SSCMs), which test whether the numbers of occurrences of input symbols satisfy a given linear constraint.
As an upper bound, \citet{merrill+sabharwal:2023} restrict to \emph{fixed-precision} transformer encoders and show that they are in uniform $\tczero$.

\begin{figure}[t]
\centering
\tikzset{every node/.style={align=center,execute at begin node={\small}}}
\tikzset{strict/.style={line width=0.4mm,->,>=Latex}}
\tikzset{unknown/.style={line width=0.2mm,->,>={Latex[open,scale=1.5]}}}
\begin{tikzpicture}[x=3cm,y=3cm]
\node(logic) at (0,0) {\logic};
\node(te) at (0,1) {transformer encoders};
\node(fixedte) at (0,-1) {fixed-precision \\ transformer encoders};
\node(sscm) at (1,-0.5) {simplified stateless \\ counter machines};
\node(tc0) at (-1,0.5) {uniform \tczero};
\tikzset{every node/.style={execute at begin node={\tiny}}}
\draw[unknown] (fixedte) to node[sloped,auto=left] {\cref{thm:upper}} (logic);
\draw[unknown] (logic) to node[sloped,auto=left] {\cref{thm:lower}} (te);
\draw[strict] (sscm) to node[sloped,auto=left] {Prop.~\labelcref{thm:sscm}} (logic);
\draw[strict,bend left=10] (sscm) to node[text width=1.5cm,align=left,auto=right] {\citealp{bhattamishra+:2020}} (te);
\draw[strict] (logic) to node[sloped,auto=left] {Prop.~\labelcref{thm:tc0}} (tc0);
\draw[strict,bend right=10] (fixedte) to node[text width=1.5cm,align=left,auto=left] {\citealp{merrill+sabharwal:2023}} (tc0);
\end{tikzpicture}
  \caption{Overview of results. Arrows indicate inclusion, and thick arrows indicate strict inclusion. We show that \logic{} is simultaneously a tighter upper bound on fixed-precision transformer encoders than uniform \tczero{} is, and a tighter lower bound on transformer encoders than SSCMs are.}
  \label{fig:overview}
\end{figure}
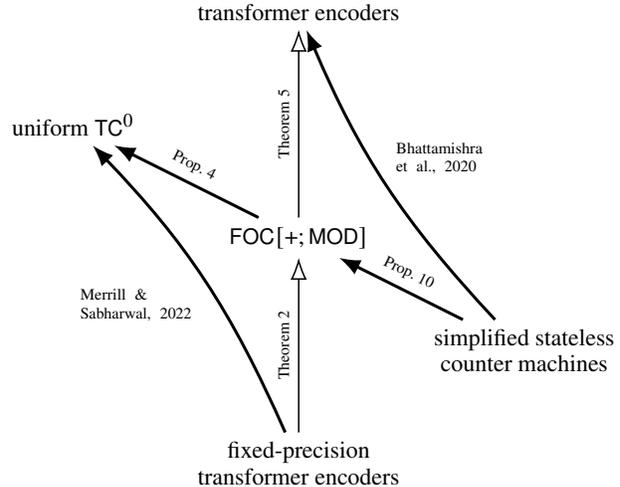

Here, we study transformers in relation to first-order logic. (\citet{merrill+sabharwal:2023} also relate them to first-order logic, but indirectly through circuits.)
We define \logic, which is first-order logic with counting quantifiers, where positions have modular predicates but not ordering, and counts have ordering and addition (\cref{sec:logic}). We then connect and strengthen the two above-mentioned results by showing that \logic{} is simultaneously an upper bound for fixed-precision transformer encoders (\cref{sec:upper}) and a lower bound for transformer encoders (\cref{sec:lower}).

These two results counterbalance each other.
As an upper bound on fixed-precision transformer encoders, \logic{} might be imagined to define languages far beyond the power of transformers,
but the lower bound assures us that it does not have any expressivity that does anything ``un-transformer-like.''
As a lower bound on (arbitrary-precision) transformer encoders, \logic{} might be imagined to be far weaker than transformers,
but the upper bound assures us that it can express everything that real-world (fixed-precision) transformers can.
Together, these two results bring us much closer than before to an exact characterization of the languages that transformer encoders recognize.

\section{Preliminaries}

We write
$\N$ for the set of natural numbers, which includes 0.
If $a$ and $b$ are integers, we write $[a,b]$ for the set $\{x \in \mathbb{Z} \mid a \leq x \leq b\}$, and
$a \cong{m} b$ iff $a$ and $b$ are congruent modulo~$m$.

We make frequent use of the Iverson bracket $\indicator{\phi}$, which has the value $1$ if the statement $\phi$ is true, and $0$ otherwise.

If $M$ is a matrix, we write $M_{i,*}$ for the $i$-th row of $M$ and $M_{*,j}$ for the $j$-th column of $M$. We write $\mathbf{0}$ for the zero vector or matrix, $\mathbf{0}^n$ for the $n$-dimensional zero vector, and $\mathbf{I}$ for the identity matrix.

We often work with families of functions $f = \bigl(f^{(n)}\bigr)_{n>0}$ where each $f^{(n)} \colon X^n \rightarrow Y^n$. For brevity, we write $f \colon X^n \rightarrow Y^n$ and apply $f$ to strings or vectors of any length.

\section{Transformers}
\label{sec:transformers}

A transformer can have an encoder and/or a decoder; following previous work \citep{bhattamishra+:2020,hahn:2020,hao-etal-2022-formal,merrill-etal-2022-saturated,merrill+sabharwal:2023log}, we focus on transformer encoders.

The input is a string $w_1 \cdots w_n$, to which we prepend a special symbol \cls{} at position 0.
Thus the network sees a sequence of $n+1$ symbols; to avoid clutter, we write $n' = n+1$.
The string is converted to an \emph{activation} matrix $A^{(0)} \in \R^{d \times n'}$, where column $A^{(0)}_{*,p}$ represents symbol $w_p$, and $d$ is the \emph{width} of the network.
A stack of $L$ layers maps $A^{(0)}$ to $A^{(1)}, A^{(2)}, \ldots, A^{(L)} \in \R^{d \times n'}$.
Then we apply a sigmoid layer to the output at \cls{} (that is, $A^{(L)}_{*,0}$) to obtain a single probability. The network, which we call a \emph{transformer classifier}, accepts $w$ iff this probability is at least $\tfrac12$.

The rest of this section describes each of these components in more detail. Readers already familiar with transformers may safely skip ahead.

\subsection{Input layer}

Each input vector $A^{(0)}_{*,p}$ is the sum of a \emph{word embedding} and a \emph{positional encoding}.
\begin{definition}
  A \emph{word embedding} on $\alphabet$ with width~$d$ is a mapping $\text{WE} \colon \alphabet \rightarrow \R^{d}$ from symbols to vectors.
\end{definition}

\begin{definition}
  A \emph{sinusoidal positional encoding} with (even) width~$d$ is a mapping from positions to vectors,
  \begin{align*}
    \text{PE} \colon \mathbb{N} &\rightarrow \R^{d} \\
        p &\mapsto
        \begin{bmatrix}
          \sin 2\pi\xi_1 p \\
          \cos 2\pi\xi_1 p \\
          \vdots \\
          \sin 2\pi\xi_{d/2} p \\
          \cos 2\pi\xi_{d/2} p
        \end{bmatrix} \qquad \xi_i \in \mathbb{Q}
\end{align*}
\end{definition}
In the original paper, the $\xi_i$ are set to fixed values ($\xi_i = \frac{i-1}{2\pi}$), which are not rational. Here, we assume that they are rational (needed for \cref{thm:upper}), and can be set to arbitrary rationals (needed for \cref{thm:lower}). Since the rationals are dense in the reals, we can choose them to be as close to the original values as we want.

\subsection{Hidden layers}

Each hidden layer has a \emph{self-attention} followed by a \emph{position-wise feed-forward network}.
\begin{definition}
A \emph{self-attention sublayer} with width $d$ and key width $\dk$ is a function
  \begin{align}
    \selfatt \colon \R^{d \times n'} &\rightarrow \R^{d \times n'} \notag \\
    A &\mapsto \begin{bmatrix} c_0 & \cdots & c_n \end{bmatrix} \ \text{where} \notag \\
    s_{qp} &= \frac{\wq A_{*,q} \cdot \wk A_{*,p}}{\sqrt d} \label{eq:logits} \\
    c_q &= \frac{\sum_{p=0}^n (\exp s_{qp}) \wv A_{*,p} }{\sum_{p=0}^n \exp s_{qp}} \label{eq:context}
  \end{align}
and $\wq, \wk \in \R^{d \times \dk}$ and $\wv \in \R^{d \times d}$ are learned.
\end{definition}

\begin{definition}
  A \emph{position-wise feed-forward network} (FFN) with width $d$ and hidden width $\dffnn$ is a function
  \begin{align*}
    \ffnn \colon \R^d &\rightarrow \R^d \\
    x &\mapsto \wtwo \left( \max(0, \wone x + \bone) \right) + \btwo
  \end{align*}
  where the $\max(0, \mathord-)$, called a \emph{ReLU}, is taken elementwise, and
  $\wone \in \R^{\dffnn \times d}$, $\bone \in \R^{\dffnn}$, $\wtwo \in \R^{d \times \dffnn}$, $\btwo \in \R^d$
  are learned.

  We also apply $f$ column-wise to matrices $A \in \R^{d \times n'}$:
  \begin{equation}
    \ffnn(A) = \begin{bmatrix} \ffnn(A_{*,0}) & \cdots & \ffnn(A_{*,n}) \end{bmatrix}. \label{eq:vectorize}
  \end{equation}
\end{definition}

\begin{definition} \label{def:layer}
  A \emph{transformer layer} with $H$ heads and width~$d$ is a function
  \begin{align*}
    \layer \colon \R^{d \times n'} &\rightarrow \R^{d \times n'} \\
    A &\mapsto A'' \ \text{where} \\
    A' &= \sum_{h=1}^H \selfatt^{(h)}(A) + A \\
    A'' &= \ffnn(A') + A'
  \end{align*}
  where the $\selfatt^{(h)}$ are self-attentions, and $\ffnn$ is a position-wise FFN. The ${}+A$ and ${}+A'$ terms are known as \emph{residual connections}.
\end{definition}

We have omitted layer normalization \citep{ba+:2016} to simplify proofs. \Cref{sec:layer_norm} explains how to add layer normalization to our definitions and proofs.

\subsection{Stacks, encoders and classifiers}

\begin{definition}
  A \emph{transformer stack} with width $d$ is a function
  \begin{align*}
    \stack &\colon \R^{d \times n'} \rightarrow \R^{d \times n'} \\
    \stack &= \layer^{(L)} \circ \dots \circ \layer^{(1)}
  \end{align*}
  where each $\layer^{(\ell)}$ is a transformer layer with width $d$.
\end{definition}

\begin{definition}
A \emph{transformer encoder} with width $d$ is a function from strings to sequences of vectors,
\begin{align*}
  \encoder \colon \alphabet^n &\rightarrow \R^{d \times n'} \\
  w &\mapsto \stack(A) \ \text{where} \\
  A &= \begin{bmatrix} \text{WE}(\cls) & \text{WE}(w_1) & \cdots & \text{WE}(w_n) \end{bmatrix} \\
  &\qquad + \begin{bmatrix} \text{PE}(0) & \text{PE}(1) & \cdots & \text{PE}(n) \end{bmatrix}
\end{align*}
where $\text{WE}$ is a word embedding, $\text{PE}$ is a positional encoding, and $\stack$ is a transformer stack with width $d$.
\end{definition}

\begin{definition}
  A \emph{transformer classifier} with width $d$ is a function
  \begin{align}
    \text{Cls} \colon \alphabet^* &\rightarrow \R \notag \\
    w &\mapsto \text{sigmoid}\left(W \left[\text{Enc}(w)\right]_{*,0} + b\right) \label{eq:output_layer}
  \end{align}
  where $\text{Enc}$ is a transformer encoder with width $d$.
  We say that $\text{Cls}$ \emph{accepts} $w$ if $\text{Cls}(w) \geq \tfrac12$, and the language recognized by $\text{Cls}$ is $\{ w \in \alphabet^* \mid \text{$\text{Cls}$ accepts $w$}\}$.  
\end{definition}  

\section{First-Order Logic with Counting Quantifiers}
\label{sec:logic}

Instead of characterizing problems (formal languages) using devices for producing or consuming strings, we can characterize them using \emph{logical formulas} that declare what properties a string must have. The classic result in this approach is that the languages of finite strings described by monadic second-order logic are exactly those recognized by finite automata \citep{buchi:1960}. \citet{merrill+sabharwal:2023} relate the expressivity of transformers to that of first-order logic with majority quantifiers, but only indirectly via circuits. Here, we relate the expressivity of transformers directly to a logic called \logic.

\subsection{Examples}

We begin with a few examples of sentences of \logic{} that define languages. Assume $\alphabet = \{\texttt{0}, \texttt{1}\}$.
\begin{enumerate}
\item $\forall p. \isletter{\texttt{0}}{p} \lor \forall p. \isletter{\texttt{1}}{p}$ defines the language $\texttt{0}^* \cup \texttt{1}^*$. The variable $p$ ranges over positions of $w$, and for any symbol $\letter \in \alphabet$, $\isletter{\letter}{p}$ is true iff the symbol at position $p$ is $\letter$. So this sentence says that all symbols are $\texttt{0}$ or all are $\texttt{1}$.
\item $\forall p. (\congruent{0}{2}{p} \rightarrow \isletter{\texttt{0}}{p} \land \congruent{1}{2}{p} \rightarrow \isletter{\texttt{1}}{p})$ defines the language $(\texttt{10})^* \cup (\texttt{10})^*\texttt{1}$. For any $r \ge 0, m > 0$, the predicate $\congruent{r}{m}{p}$ tests whether $p \cong{m} r$. So this sentence says that all symbols in even positions are $\texttt{0}$'s and those in odd positions are $\texttt{1}$'s.
\item\label{item:equal} $\exists x. \left( \counteq{x}p. \isletter{\texttt{0}}{p} \land \counteq{x}p. \isletter{\texttt{1}}{p} \right)$ defines the language of strings with an equal number of $\texttt{0}$'s and $\texttt{1}$'s. The variable $x$ ranges over numbers. The $\counteq{x}$ is a \emph{counting quantifier}; the subformula $\counteq{x}p.\isletter{\texttt{0}}{p}$ says that there are exactly $x$ positions $p$ that make $\isletter{\texttt{0}}{p}$ true. Similarly, $\counteq{x}p.\isletter{\texttt{1}}{p}$ says that there are exactly $x$ occurrences of $\texttt{1}$.
  \item $\exists x. \exists y. \left( \counteq{x}p. \isletter{\texttt{0}}{p} \land \counteq{y}p. \isletter{\texttt{1}}{p} \land 2x = y \right)$ defines the language of strings with twice as many $\texttt{1}$'s as $\texttt{0}$'s. We allow linear equations or inequalities like $2x=y$, but only on count variables.
\end{enumerate}

\subsection{Definition}

We now describe the syntax of \logic, given a fixed finite alphabet~$\alphabet$, and its intended interpretation with reference to finite strings $w = w_1 \cdots w_n$ over $\alphabet$.
The syntax has two sorts \citep[p.~185--187]{immerman:1999}:
\begin{itemize}
\item The sort of \emph{positions} has variables $p, \dots$, which stand for positions of $w$, that is, integers in $[1, n]$.
\item The sort of \emph{counts} has
  \begin{itemize}
  \item variables $x, y, z, \dots$, which stand for rational numbers
  \item terms $c_0 + c_1 x_1 + \dots + c_k x_k$, where each $c_i$ is a rational number and each $x_i$ is a count variable.
  \end{itemize}
\end{itemize}

A formula of \logic{} is one of:
\begin{itemize}
\item $\true$ for true or $\false$ for false.
\item $\isletter{\letter}{p}$ where $\letter \in \alphabet$, which is true iff $w_p = \letter$.
\item $\congruent{r}{m}{p}$ where $r \ge 0, m > 0$, which is true iff $p \cong{m} r$.
\item $t_1 = t_2$, $t_1 < t_2$, where $t_1$ and $t_2$ are terms (in the sort of counts).
\item $\phi_1 \land \phi_2$, $\phi_1 \lor \phi_2$, $\neg \phi_1$ where $\phi_1$ and $\phi_2$ are formulas.
\item $\exists x. \phi$, $\forall x. \phi$ where $x$ is a count variable and $\phi$ is a formula.
\item $\counteq{x} p. \phi$, where $x$ is a count variable, $p$ is a position variable, and $\phi$ is a formula, which is true iff $\phi$ is true for exactly $x$ values of $p$. (Note that $\counteq{x} p$ binds $p$ but leaves $x$ free.)
\end{itemize}
Connectives $\phi \rightarrow \psi$ and $\phi \leftrightarrow \psi$ can be expressed as $\phi \rightarrow \psi \equiv \neg \phi \lor \psi$ and $\phi \leftrightarrow \psi \equiv \phi \rightarrow \psi \land \psi \rightarrow \phi$.
Quantifiers $\exists p. \phi$ and $\forall p. \phi$ can be expressed using $\exists x. (x > 0 \land \counteq{x} p. \phi)$ and $\exists x. (\counteq{x} p. \true \land \counteq{x} p. \phi)$, respectively.

A sentence is a formula with no free variables.
If $w \in \alphabet^*$ and $\sentence$ is a sentence, we write $w \models \sentence$ if $w$ makes $\sentence$ true under the intended interpretation.
\begin{definition}
  If $\sentence$ is a sentence of \logic{}, the language defined by $\sentence$ is $\{ w \mid w \models \sentence\}$.
\end{definition}

\subsection{Normal form}

The part of the logic having to do with position variables is like monadic first-order logic, in which all predicates are monadic (that is, they take only one argument).
The part of the logic having to do with count variables is the theory of rational numbers with ordering and addition (but not multiplication).
Both of these other logics have useful normal forms: monadic first-order logic has a normal form that uses only one variable \citep[p.~274--275]{boolos+:2007}, while the theory of rationals with ordering and addition has quantifier elimination \citep{robinson+zakon:1960,ferrante+rackoff:1975}.
We can combine these two results to get a very simple normal form for \logic{}.

\begin{theorem} \label{thm:normal_form}
  Every formula $\phi$ of \logic{} is equivalent to a formula of the form
  \begin{equation}
    \phi' = \exists x_1. \dots \exists x_k. \left( \bigwedge_i \counteq{x_i} p. \psi_i \land \chi \right)
  \end{equation}
  where
  \begin{itemize}
    \item Each $\psi_i$ is quantifier-free and has no free count variables.
    \item $\chi$ is quantifier-free.
  \end{itemize}
\end{theorem}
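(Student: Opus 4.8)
The plan is to combine the two normal forms indicated just above: the one‑variable normal form for monadic first‑order logic, applied to the part of $\phi$ dealing with positions, and quantifier elimination for the theory of the rationals with ordering and addition, applied to the part dealing with counts.

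First I would reduce every counting quantifier in $\phi$ to one whose body is quantifier‑free in a single position variable. The structural fact that makes this possible is that each position atom ($\isletter{\letter}{p}$, $\congruent{r}{m}{p}$) is unary and each count atom mentions no position at all, so every literal of \logic{} mentions at most one position variable, and when it mentions one it mentions no count variable. Fix the finitely many alphabet symbols and moduli $m$ occurring in $\phi$, and call a \emph{position type} any maximal consistent conjunction of literals of the forms $\isletter{\letter}{p}$, $\congruent{r}{m}{p}$ and their negations over this data; there are finitely many types $\tau_1,\dots,\tau_N$, and distinct types are mutually exclusive. Now rewrite $\phi$ by repeatedly selecting a counting quantifier $\counteq{x}p.\psi$ whose body $\psi$ contains no counting quantifier, maintaining the invariant that the formula so far has the shape $\exists\bar n.(\mathrm{C}\wedge\phi^\ast)$ where $\mathrm{C}$ is a conjunction of definitional constraints $\counteq{n}p.\tau$ and $\phi^\ast$ holds the not‑yet‑processed counting quantifiers. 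To process $\counteq{x}p.\psi$, put $\psi$ in disjunctive normal form and split each disjunct into the conjunction of its $p$‑literals and the conjunction $\gamma$ of the remaining literals (which involve only other position variables and count variables); since the $p$‑part is equivalent to a disjunction of position types, $\{p\mid\psi\}$ is the disjoint union, over the types $\tau$ for which the disjunction of the associated $\gamma$'s holds, of $\{p\mid\tau(p)\}$. Introducing a fresh count variable $n_\tau$ for each type, $\counteq{x}p.\psi$ becomes a quantifier‑free condition asserting that $x$ equals the corresponding sum of $n_\tau$'s, together with the constraints $\counteq{n_\tau}p.\tau$; and since each such constraint pins $n_\tau$ down as a function of $w$ alone — independently of any counted variable — the existential $\exists n_\tau$ together with its constraint can be hoisted outward past Boolean connectives, count quantifiers, and counting quantifiers alike, restoring the invariant. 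When all counting quantifiers have been processed, $\phi^\ast$ contains none, so (apart from literals in free position variables of $\phi$, which we may treat as opaque propositional atoms) it is a formula purely in the sort of counts.

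Next I would invoke quantifier elimination. The theory of the rationals with ordering and addition (with rational constants) admits quantifier elimination, as noted above, and this persists in the presence of opaque propositional atoms; applying it to $\phi^\ast$ removes every $\exists x$ and $\forall x$ and yields a quantifier‑free formula $\chi$ in the variables $n_{\tau_j}$, the free count variables of $\phi$, and the opaque position atoms. Collecting everything and renaming $n_{\tau_1},\dots,n_{\tau_N}$ to $x_1,\dots,x_k$ and $\tau_j$ to $\psi_j$, we obtain
\[
 \phi \ \equiv\ \exists x_1.\dots\exists x_k.\Bigl(\bigwedge_i \counteq{x_i}p.\psi_i \ \wedge\ \chi\Bigr),
\]
where each $\psi_i$ is a position type — quantifier‑free, with $p$ its only variable, hence no free count variables — and $\chi$ is quantifier‑free, as required.

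The main obstacle is the first step: making precise, from the one‑variable character of the position fragment, that the body of each counting quantifier genuinely reduces to a single position variable with everything else spilling out as side conditions, and — most delicately — verifying that hoisting each $\exists n_\tau$ together with its constraint $\counteq{n_\tau}p.\tau$ all the way to the front is sound through every logical construct, which relies on the fact that the constraint determines $n_\tau$ functionally and independently of any bound position. Once this step is in place, the count side is a black‑box appeal to quantifier elimination.
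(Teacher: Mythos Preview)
Your approach is correct and genuinely different from the paper's. The paper proceeds in three steps: first, by structural induction it pushes counting quantifiers inward and non-$p$ material outward, using two small lemmas---an inclusion--exclusion identity for $\counteq{x}p.\bigvee_i\phi_i$, and the identity $\counteq{x}p.(\phi\land\psi)\equiv(\neg\phi\land x{=}0)\lor(\phi\land\counteq{x}p.\psi)$ when $p$ is not free in $\phi$---until every $\counteq{x_i}p.\psi_i$ has $\psi_i$ quantifier-free in $p$ alone; second, it replaces each such subformula by $x_i=x_i'$ for a fresh $x_i'$ and hoists the definitions $\counteq{x_i'}p.\psi_i$ to the front; third, it applies quantifier elimination to what remains. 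Your route is more global: you fix once and for all the finitely many position types over the symbols and moduli occurring in $\phi$, introduce one canonical count $n_\tau$ per type, and rewrite every $\counteq{x}p.\psi$ as a case split on which side conditions $\gamma_\tau$ hold, with $x$ equal to the corresponding sum of $n_\tau$'s. This buys a slightly more refined normal form (each $\psi_i$ is a full type, and the same $n_\tau$'s serve all counting quantifiers at once), whereas the paper's syntactic approach avoids committing to types and yields $\psi_i$'s that are arbitrary quantifier-free position formulas. One point to tighten: when the body $\psi$ of an innermost $\counteq{x}p$ itself contains $\exists y$ or $\forall y$, ``put $\psi$ in DNF and split off the $p$-literals'' is not literally well-defined, since a subformula like $\exists y.\theta(p,y)$ is neither a $p$-literal nor $p$-free; the fix is either to push position atoms out of count quantifiers first (they share no variables, so this is immediate), or---closer to your intent---to case-split directly on the type of $p$ rather than going through syntactic DNF.
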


\begin{proof}
  See \cref{sec:normal_form}.
\end{proof}

\begin{toappendix}
\section{Proof of \cref{thm:normal_form}}
\label{sec:normal_form}

\Cref{thm:normal_form} says that 
every formula $\phi$ of \logic{} is equivalent to a formula of the form
  \begin{equation} \label{eq:normal_form}
    \phi' = \exists x_1. \dots \exists x_k. \left( \bigwedge_i \counteq{x_i} p. \psi_i \land \chi \right)
  \end{equation}
  where
  \begin{itemize}
    \item Each $\psi_i$ is quantifier-free and has no free count variables.
    \item $\chi$ is quantifier-free.
  \end{itemize}

The construction of $\phi'$ proceeds in three steps.
First, move counting quantifiers inward until they are of the form $\counteq{x_i}p.\psi_i$ where $\psi_i$ is quantifier-free and has no free count variables.
Second, move all subformulas $\counteq{x_i}p.\psi_i$ outward, leaving behind~$\chi$, as in \cref{eq:normal_form}.
Third, eliminate quantifiers from $\chi$.

The first step is analogous to the proof that a formula of monadic first-order logic can be converted to one with only one variable \citep[p.~274--275]{boolos+:2007}, which moves existential quantifiers inwards using the two following facts: $\exists x. (\phi \lor \psi) \equiv (\exists x. \phi) \lor (\exists x. \psi)$, and if $x$ does not occur free in $\phi$, then $\exists x. (\phi \land \psi) \equiv \phi \land (\exists x. \psi)$. With counting quantifiers, neither of the above holds, but the following two lemmas serve similar purposes.
\begin{lemma}
  \label{thm:distribute_counteq}
  Any formula $\phi = \counteq{x}p. \bigvee_{i=1}^l \phi_i$ is equivalent to a formula
  \begin{align}
   \phi' &= \exists x_1. \dots \exists x_m. \left( \bigwedge_{j=1}^m \counteq{x_j} p. \psi_j \land \chi \right) \label{eq:distribute_counteq}
  \end{align}
  where the $x_i$ are fresh count variables,
  each $\psi_j$ is a conjunction of a subset of the $\phi_i$, and $\chi$ has no free position variables.
\end{lemma}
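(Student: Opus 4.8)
The plan is to prove the lemma by \emph{inclusion--exclusion}, turning the single count of a disjunction into an alternating sum of counts of conjunctions; this is expressible in \logic{} because count terms allow arbitrary rational coefficients.

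Concretely, for each nonempty subset $S \subseteq \{1,\dots,l\}$ I would introduce a fresh count variable $x_S$ and set $\psi_S = \bigwedge_{i\in S}\phi_i$ (a conjunction of a subset of the $\phi_i$, as required). With $m = 2^l-1$, take
\[
  \phi' = \exists (x_S)_S.\ \Bigl( \bigwedge_{S} \counteq{x_S}p.\,\psi_S \ \land\ x = \sum_{\emptyset\neq S\subseteq\{1,\dots,l\}} (-1)^{|S|+1}\,x_S \Bigr),
\]
where $S$ ranges over the nonempty subsets of $\{1,\dots,l\}$. The last conjunct, call it $\chi$, is an equation between count terms (the coefficients $(-1)^{|S|+1}$ are rationals) and has no free position variables, so $\phi'$ has exactly the claimed shape; it then remains to check $\phi \equiv \phi'$.

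For the equivalence, fix a string $w$ and an assignment to the free variables of $\phi$ (which include $x$ and any position variables other than $p$ occurring in the $\phi_i$). For each nonempty $S$, the conjunct $\counteq{x_S}p.\,\psi_S$ forces $x_S$ to equal $n_S \coloneqq \lvert\{\,p\in[1,n] \mid w\models\psi_S\,\}\rvert$, a quantity determined by $w$ and the assignment alone; hence $\phi'$ holds iff $x = \sum_{\emptyset\neq S}(-1)^{|S|+1}n_S$. Applying the inclusion--exclusion principle to the sets $A_i = \{\,p \mid w\models\phi_i\,\}$ (so that $n_S = \lvert\bigcap_{i\in S}A_i\rvert$), this sum equals $\lvert A_1\cup\dots\cup A_l\rvert = \lvert\{\,p\in[1,n]\mid w\models\bigvee_{i=1}^l\phi_i\,\}\rvert$, which is precisely the value $x$ must take for $\phi=\counteq{x}p.\bigvee_i\phi_i$ to hold. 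So $w$ and the assignment satisfy $\phi$ iff they satisfy $\phi'$. The degenerate case $l=0$ (empty disjunction, equivalent to $\false$) is handled by taking $m=0$ and $\chi$ the equation $x=0$.

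There is no deep obstacle here; the only points needing care are that \logic{} really does permit the negative rational coefficients that inclusion--exclusion introduces, and that the $x_S$ are chosen fresh so that substitution does not capture variables of the $\phi_i$. An alternative, perhaps closer in spirit to ``moving quantifiers inward,'' is to induct on $l$ using the single identity $\counteq{x}p.(\alpha\lor\beta)\equiv\exists x_\alpha\exists x_\beta\exists x_\wedge.\bigl(\counteq{x_\alpha}p.\,\alpha\land\counteq{x_\beta}p.\,\beta\land\counteq{x_\wedge}p.(\alpha\land\beta)\land x+x_\wedge=x_\alpha+x_\beta\bigr)$ and distributing conjunction over disjunction at each step; this needs slightly more bookkeeping to verify that the disjuncts produced stay conjunctions of subsets of the original $\phi_i$.
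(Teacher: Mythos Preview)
Your proposal is correct. The paper proves this lemma by induction on $l$, peeling off one disjunct at a time via the two-set identity $|A\cup B|=|A|+|B|-|A\cap B|$ (exactly the alternative you sketch in your final paragraph), then distributing the conjunction over the remaining disjunction and invoking the inductive hypothesis. Your main argument instead applies full inclusion--exclusion in one shot, introducing a count variable for every nonempty subset of $\{1,\dots,l\}$ and tying them together with a single linear equation in $\chi$. Both routes are sound and yield formulas of the required shape; your direct approach gives an explicit one-line $\phi'$ and avoids the bookkeeping of tracking how conjunctions-of-subsets propagate through the induction, at the cost of immediately producing $2^l-1$ count variables rather than building them up incrementally.
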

\begin{proof}
  By induction on $l$. The case $l=1$ is trivial. For $l>1$, write
  \begin{align*}
    \phi &= \counteq{x}p. \Biggl(\,\bigvee_{i=1}^{l-1} \phi_i \lor \phi_l\Biggr) \\
    &\equiv \exists x_1. \exists x_2. \exists x_3. \Biggl(\counteq{x_1}p. \bigvee_{i=1}^{l-1} \phi_i \land \counteq{x_2}p.\phi_l \land \counteq{x_3} p. \left( \bigvee_{i=1}^{l-1} \phi_i \land \phi_l \right) \land x = x_1 + x_2 - x_3\Biggr) \\
    &\equiv \exists x_1. \exists x_2. \exists x_3. \Biggl({\underbrace{\counteq{x_1}p. \bigvee_{i=1}^{l-1} \phi_i}_{*}} \land \counteq{x_2}p.\phi_l \land {\underbrace{\counteq{x_3} p. \bigvee_{i=1}^{l-1} \left( \phi_i \land \phi_l \right)}_{*}} \land x = x_1 + x_2 - x_3\Biggr)
  \end{align*}
  and use the inductive hypothesis on the subformulas marked $*$ to put this into the form~(\ref{eq:distribute_counteq}).
\end{proof}

\begin{lemma} \label{thm:move_out_of_counteq}
  If $p$ does not occur free in $\phi$, then
  \[ \counteq{x} p. (\phi \land \psi) \equiv (\neg \phi \land x = 0) \lor (\phi \land \counteq{x}p. \psi). \]
\end{lemma}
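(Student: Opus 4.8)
The plan is to prove the equivalence directly from the semantics. Fix a string $w$ and an assignment $\nu$ to the free variables of the two formulas (note that $x$ is free on both sides, while $p$ is bound inside every $\counteq{}$ quantifier, so it does not receive a value from $\nu$). Because $p$ does not occur free in $\phi$, the truth value of $\phi$ under $w$ and $\nu$ does not depend on which position is assigned to $p$; write $\llbracket \phi \rrbracket \in \{\true,\false\}$ for this common value. The argument is then a two-way case split on $\llbracket \phi \rrbracket$.

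If $\llbracket \phi \rrbracket = \false$, then $\phi \land \psi$ is false for every choice of position for $p$, so there are exactly $0$ positions satisfying $\phi \land \psi$; hence the left-hand side $\counteq{x}p.(\phi \land \psi)$ holds iff $\nu(x) = 0$. On the right-hand side, $\neg\phi$ holds, so the first disjunct $(\neg\phi \land x = 0)$ reduces to $\nu(x) = 0$, while the second disjunct $(\phi \land \counteq{x}p.\psi)$ is false; so the right-hand side also holds iff $\nu(x) = 0$, matching the left.

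If $\llbracket \phi \rrbracket = \true$, then for every position $p$, $\phi \land \psi$ holds at $p$ exactly when $\psi$ holds at $p$, so the set of witnessing positions for $\counteq{x}p.(\phi \land \psi)$ coincides with that for $\counteq{x}p.\psi$; hence the left-hand side holds iff $\counteq{x}p.\psi$ does. On the right-hand side, the first disjunct is false (since $\neg\phi$ fails) and the second reduces to $\counteq{x}p.\psi$, again matching the left. Since the two formulas have the same truth value under every $w$ and $\nu$, they are equivalent.

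I do not expect any real obstacle here; the proof is a short case analysis. The only points requiring care are bookkeeping points: making explicit that ``$p$ not free in $\phi$'' is precisely what licenses pulling $\llbracket\phi\rrbracket$ out of the count over $p$, and noting that the stated equivalence is between formulas in which $x$ remains free (so it is an equivalence of open formulas, applied pointwise over assignments, not of sentences).
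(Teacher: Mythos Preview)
Your proof is correct; the case split on the truth value of $\phi$ (which is well-defined independently of $p$ precisely because $p$ is not free in $\phi$) is exactly the right idea, and both cases are handled cleanly. The paper in fact states this lemma without proof, treating it as immediate from the semantics of $\counteq{x}p$, so your argument is the natural elaboration of what the paper leaves implicit.
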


Using these facts, we can prove the following:
\begin{proposition} \label{thm:clear}
  Every formula is equivalent to a formula in which
  \begin{itemize}
  \item in every subformula $\exists x. \phi$, there are no free position variables.
  \item in every subformula $\counteq{x} p. \phi$, $\phi$ is quantifier-free and the only free variable in $\phi$ is $p$ itself.
  \end{itemize}
\end{proposition}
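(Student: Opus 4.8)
The plan is to prove \cref{thm:clear} by structural induction on the formula, establishing both invariants simultaneously. The base cases ($\true$, $\false$, $\isletter{\letter}{p}$, $\congruent{r}{m}{p}$, and count comparisons $t_1 = t_2$ and $t_1 < t_2$) are immediate, since such formulas contain no quantifiers at all; the Boolean cases ($\phi_1 \land \phi_2$, $\phi_1 \lor \phi_2$, $\neg\phi_1$) follow at once from the induction hypothesis applied to the immediate subformulas, since every quantifier-subformula of the result already occurs in a processed subformula. So the work lies entirely in the two quantifier cases. In each, I would first apply the induction hypothesis to the body, obtaining an equivalent $\phi^*$ that already satisfies both invariants, and then restructure the outermost quantifier.

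For $\exists x.\phi^*$: I would put $\phi^*$ into disjunctive normal form, treating each maximal subformula of the form $\counteq{y}q.\psi$ or $\exists y.\psi$, and each count comparison, as an opaque literal alongside the position atoms $\isletter{\letter}{p}$ and $\congruent{r}{m}{p}$; then distribute $\exists x$ over the disjunction and, in each disjunct, use $\exists x.(\alpha \land \beta) \equiv \alpha \land \exists x.\beta$ when $x$ does not occur in $\alpha$ to pull out the $x$-free conjuncts. The point is that, because $\phi^*$ already satisfies the invariants, every literal mentioning $x$ is free of position variables — a $\counteq{y}q.\psi$ literal has only $q$ free, an $\exists y.\psi$ literal has no free position variable, and count comparisons have none — so each residual $\exists x.\beta$ has no free position variable, which is exactly the required invariant; the untouched $\counteq{}$-subformulas keep their single-free-variable bodies.

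For $\counteq{x}p.\phi^*$: I would again put $\phi^*$ into DNF as above and split each disjunct into a conjunction $\alpha_k \land \beta_k$ of the literals mentioning the bound variable $p$ and those not mentioning it. Here the invariants on $\phi^*$ force each $\alpha_k$ to be a conjunction of atoms $\isletter{\letter}{p}$, $\congruent{r}{m}{p}$ and their negations only — hence quantifier-free with $p$ its only free variable — since no $\counteq{y}q$- or $\exists$-subformula of $\phi^*$ can have $p$ free. Then I would apply \cref{thm:distribute_counteq} to move the counting quantifier past the disjunction, yielding $\exists x_1.\dots\exists x_m.\bigl(\bigwedge_j \counteq{x_j}p.\psi_j \land \chi\bigr)$ with $\chi$ free of position variables, and apply \cref{thm:move_out_of_counteq} repeatedly to pull every position-variable-free conjunct out of each $\counteq{x_j}p.\psi_j$, leaving subformulas $\counteq{x_j}p.\alpha'_j$ with $\alpha'_j$ quantifier-free and only $p$ free.

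I expect the main obstacle to be the interaction between counting quantifiers and free position variables other than the bound one. If $\phi^*$ has a free position variable $q \neq p$, the conjuncts pulled out by \cref{thm:move_out_of_counteq} may mention $q$, so the existentials $\exists x_1,\dots,\exists x_m$ introduced by \cref{thm:distribute_counteq} end up scoping over a subformula with $q$ free, violating the first invariant. My fix would be to finish the $\counteq{x}p$ case by re-running the $\exists x.\phi$ construction above on $\exists x_m, \exists x_{m-1}, \dots, \exists x_1$ in turn (innermost outward); this is sound because that construction never alters the body of any counting-quantifier subformula, so the second invariant is preserved, and because by that stage every counting quantifier already has a body of the required shape. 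The remaining details — that the DNF rearrangements and the two lemmas leave already-cleaned subformulas intact, and that \cref{thm:move_out_of_counteq}, stated for a single position-variable-free conjunct, extends to a conjunction of such conjuncts by an easy induction — are routine.
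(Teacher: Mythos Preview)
Your proposal is correct and follows essentially the same route as the paper: structural induction, DNF at each quantifier, distributing $\exists x$ over disjuncts and pulling $x$-free conjuncts out, and in the $\counteq{x}p$ case invoking \cref{thm:distribute_counteq} then \cref{thm:move_out_of_counteq} to strip out the $p$-free conjuncts. The only cosmetic difference is that the paper handles the stray $P(q)$ literals ($q\neq p$) by moving them out of the $\exists x_i$ in one step, whereas you re-run the $\exists x$ construction; these amount to the same thing.
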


\begin{proof} By induction on subformulas. Call a subformula \emph{clear} if it has the two properties listed above.
  
  \paragraph{Case $\exists x. \phi$:}
  Write $\phi$ as a Boolean combination of subformulas $\phi_1, \ldots, \phi_m$ that are either atomic or start with quantifiers. We can put $\phi$ into DNF in terms of the $\phi_i$ and distribute the $\exists x$ over the disjuncts. By the induction hypothesis, each of the $\phi_i$ is equivalent to a $\phi_i'$ which is clear. So $\phi \equiv \bigvee_{j=1}^m \exists x. \bigwedge_k \psi_{jk}$ where each $\psi_{jk}$ is one of the $\phi_i'$.
  Consider each $\psi_{jk}$.
  \begin{itemize}
  \item If $\psi_{jk}$ is $P(p)$ or $\neg P(p)$, then it can be moved out of the $\exists x$.
  \item If $\psi_{jk}$ starts with $\exists y$, $\neg \exists y$, $\counteq{y} p$, or $\neg \counteq{y} p$, then (because $\psi_{jk}$ is clear) it has no free position variables and does not need to be moved.
  \item If $\psi_{jk}$ is one of $y = z$, $y < z$, $y_1 + y_2 = z$, or their negations, then it has no free position variables and does not need to be moved.
  \end{itemize}
  Thus we have constructed a subformula equivalent to $\exists x. \phi$ that is clear.

  \paragraph{Case $\counteq{x} p. \phi$:}
  Again, write $\phi$ as a Boolean combination of subformulas $\phi_i$ that are either atomic or start with quantifiers, put $\phi$ into DNF in terms of the $\phi_k$. By the induction hypothesis, each of the $\phi_i$ is equivalent to a $\phi_i'$ which is clear. Use \cref{thm:distribute_counteq} to obtain \[ \phi \equiv \exists x_1. \cdots \exists x_m. \left( \bigwedge_j \counteq{x_j} p. \bigwedge_k \psi_{jk} \land \chi \right) \]
  where each $\psi_{jk}$ is one of the $\phi_i'$.
  Consider each $\psi_{jk}$.
  \begin{itemize}
  \item If $\psi_{jk}$ starts with $\exists y$, $\neg \exists y$, $\counteq{y} p$, or $\neg \counteq{y} p$, then it has no free position variables and can be moved out of the $\counteq{x_j}p$ using \cref{thm:move_out_of_counteq}.
  \item If $\psi_{jk}$ is one of $y = z$, $y < z$, $y_1 + y_2 = z$, or their negations, then it can be moved out of the $\counteq{x_j}p$ using \cref{thm:move_out_of_counteq}.
  \item If $\phi_{jk}$ is $P(q)$ or $\neg P(q)$ where $q \neq p$, then it can be moved out of the $\counteq{x}p$ using \cref{thm:move_out_of_counteq}, and also out of the $\exists x_i$.
  \item If $\phi_{jk}$ is $P(p)$ or $\neg P(p)$, then it only has free variable $p$ and does not need to be moved.
  \end{itemize}
  Thus we have constructed a subformula equivalent to $\counteq{x}p. \phi$ that is clear.
\end{proof}

This completes the first step.
For the second step, let $\chi$ be the formula obtained as follows: for every subformula $\counteq{x_i} p. \psi_i$, let $x_i'$ be a fresh count variable and replace the subformula with $x_i = x_i'$. Thus we have
\begin{equation*}
  \phi \equiv \exists x_1'. \dots \exists x_k'. \left( \bigwedge_i \counteq{x_i} p. \psi_i \land \chi \right)
\end{equation*}
which is almost in the desired form except that $\chi$ still has quantifiers.

The third step is the following:
\begin{theorem}[\citealp{ferrante+rackoff:1975}] \label{thm:qe}
  For any formula $\chi$ with no position variables (free or bound), there is a quantifier-free formula $\chi'$ equivalent to $\chi$.
\end{theorem}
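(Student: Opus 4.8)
The plan is to prove this by the standard Fourier--Motzkin quantifier-elimination argument for linear arithmetic over the ordered rationals, adapted to the syntax at hand. Since $\chi$ ranges only over count variables, it is a first-order formula in the language with binary $<$, equality, and the rational constants and scalings implicit in the terms $c_0 + c_1 x_1 + \dots + c_k x_k$, with intended model $(\mathbb{Q}, <, +, 0, \{c\}_{c \in \mathbb{Q}})$. It suffices to show how to eliminate a single existential quantifier: pushing negations inward and using $\forall x.\,\phi \equiv \neg \exists x.\,\neg\phi$, an arbitrary $\chi$ reduces to a sequence of single-variable eliminations performed from the innermost quantifier outward, each producing a quantifier-free formula to which the next elimination applies.

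So consider $\exists x.\,\psi$ with $\psi$ quantifier-free. First I would put $\psi$ into disjunctive normal form; since $\exists x.\,(\alpha \lor \beta) \equiv (\exists x.\,\alpha) \lor (\exists x.\,\beta)$, it suffices to eliminate $\exists x$ from a conjunction of literals. Each literal is an (in)equality between terms or its negation; rewriting $\neg(t_1 = t_2)$ as $t_1 < t_2 \lor t_2 < t_1$ and $\neg(t_1 < t_2)$ as $t_2 < t_1 \lor t_1 = t_2$ and re-distributing over the new disjunctions, I may assume every literal has the form $t = 0$ or $t < 0$ for a linear term $t$. Inspecting the coefficient of $x$ in each such $t$ and dividing through by it when nonzero (legitimate, since coefficients are rationals and the language contains all rational scalings), every literal either does not mention $x$, or can be put into exactly one of the forms $x = s$, $x < s$, or $x > s$, where $s$ is a linear term in the remaining variables. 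Let $\chi_0$ be the conjunction of the literals not mentioning $x$.

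Now there are two cases. If some literal has the form $x = s$, then $\exists x.\,\psi$ is equivalent to the conjunction of $\chi_0$ with the result of substituting $s$ for $x$ in the remaining literals, which is quantifier-free. Otherwise $\psi$ is $\chi_0 \land \bigwedge_i x > \ell_i \land \bigwedge_j x < u_j$, and I would invoke the fact that $(\mathbb{Q}, <)$ is a dense linear order without endpoints to obtain
\[
  \exists x.\,\psi \;\equiv\; \chi_0 \land \bigwedge_{i,j} \ell_i < u_j,
\]
where an empty conjunction (no lower bounds, or no upper bounds) is $\true$; this is again quantifier-free. Iterating over all quantifiers of $\chi$ produces the desired $\chi'$.

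I expect the only real subtlety to be bookkeeping rather than mathematics: making sure the rewriting of negated literals and the DNF conversions keep the formula within the allowed term language, and handling the degenerate sub-cases (a literal whose $x$-coefficient vanishes, or the absence of lower or upper bounds) so that density and unboundedness are invoked correctly. None of this is deep --- it is exactly the Ferrante--Rackoff / Fourier--Motzkin procedure --- so the ``hard part'' is simply to present it carefully; the result itself is classical, and in the paper it may equally well be cited outright.
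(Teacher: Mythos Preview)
Your argument is correct: this is the standard Fourier--Motzkin elimination for linear arithmetic over a dense unbounded linear order, and every step you outline goes through in the term language of the paper (rational coefficients mean you can divide literals through by the coefficient of $x$, and density plus unboundedness of $\mathbb{Q}$ handle the pure-inequality case exactly as you state).

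The paper, however, does not prove this theorem at all: it simply cites \citet{ferrante+rackoff:1975} and moves on. So your proposal is not so much a different route as a self-contained proof where the paper is content with a reference. That is a reasonable choice for an appendix if self-containment matters; if not, a citation suffices, as you yourself note in your last sentence. One minor remark: Ferrante and Rackoff's own procedure is phrased in terms of test points (midpoints $(\ell_i + u_j)/2$ and $\pm\infty$) rather than the Fourier--Motzkin pairwise comparison $\bigwedge_{i,j} \ell_i < u_j$, but over $\mathbb{Q}$ the two are interderivable and yield the same quantifier-free output, so this is a presentational difference only.
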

Apply this procedure to $\chi$ and call the result $\chi'$.
Finally, let
\begin{equation*}
  \phi' = \exists x_1'. \dots \exists x_k'. \left(\bigwedge_i \counteq{x_i'} p. \psi_i \land \chi' \right). \tag*{\qedhere}
\end{equation*}

\end{toappendix}

It may seem odd that count variables range over rational numbers, when counts are always integers. This technicality simplifies the normal form: if we had used integers, then the part of the logic having to do with count variables would be Presburger arithmetic, and the normal form would require allowing $\congruent{r}{m}{x}$ on count variables as well.

\section{From Transformers to \logic}
\label{sec:upper}

In this section, we prove the following theorem, which sets an upper bound on the expressivity of fixed-precision transformer classifiers.

\begin{theorem} \label{thm:upper}
  Every language that is recognizable by a fixed-precision transformer classifier is definable by a sentence of \logic.
\end{theorem}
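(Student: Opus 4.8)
The plan is to argue by induction on the $L$ layers of the transformer, maintaining the invariant that for every layer $\ell \in \{0,\dots,L\}$ and every value $v$ that an activation vector can take at a position there is a formula $\phi_{\ell,v}(p)$ of \logic{} with single free position variable $p$ such that, on input $w_1\cdots w_n$, $\phi_{\ell,v}$ holds of $p \in [1,n]$ iff $A^{(\ell)}_{*,p} = v$; together with a quantifier-free formula (a \emph{position-$0$ indicator}) recording the value of the \cls{} column $A^{(\ell)}_{*,0}$. Fixed precision enters exactly here: each $A^{(\ell)}_{*,p}$ ranges over a single \emph{finite} set $S_\ell \subseteq \R^d$, so these finitely many formulas completely describe the layer-$\ell$ activations.

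For the base case, the input vector at a position $p$ is the sum $\text{WE}(w_p) + \text{PE}(p)$ rounded to the working precision, which depends on $w_p$ through the finitely many cases $\isletter{\letter}{p}$ and on $p$ only through $p \bmod M$, where $M$ is a common multiple of the denominators of the rational frequencies $\xi_i$: since the $\xi_i$ are rational, $\sin 2\pi\xi_i p$ and $\cos 2\pi\xi_i p$ are periodic in integer $p$ with period dividing $M$. Thus $\phi_{0,v}(p)$ is a quantifier-free disjunction, over pairs $(\letter,r)$ with $\text{round}(\text{WE}(\letter)+\text{PE}(r)) = v$, of $\isletter{\letter}{p} \land \congruent{r}{M}{p}$, and $A^{(0)}_{*,0}$ is the constant $\text{round}(\text{WE}(\cls)+\text{PE}(0))$.

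For the inductive step, a layer is a sum of self-attention heads with a residual connection followed by a position-wise feed-forward network with a residual connection; the feed-forward map, the residual additions, and the final sigmoid read-out are all fixed functions between finite sets, so they merely require rewriting each $\phi_{\ell,v}(p)$ as $\bigvee_{v'} \phi_{\ell,v'}(p)$ over the $v'$ sent to $v$. The real work is self-attention. For each value $v \in S_\ell$ I would introduce a count variable $x_v$, bound by $\counteq{x_v} p. \phi_{\ell,v}(p)$, recording how many positions of $[1,n]$ carry value $v$ at layer $\ell$ (corrected by one for position $0$ via its indicator). Since each logit $s_{qp}$ depends only on the values at $q$ and $p$, so does the fixed-precision quantity $\exp s_{qp}$, which is therefore a rational \emph{constant} once those two values are fixed; hence, for each fixed value $t$ at the query position, the numerator $\sum_p(\exp s_{qp})\,\wv A_{*,p}$ and denominator $\sum_p \exp s_{qp}$ of the context vector are linear forms $\sum_v x_v\mathbf{w}_{t,v}$ and $\sum_v x_v d_{t,v}$ in the count variables, with fixed rational coefficients and positive denominator. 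The rounded context vector equals a given $u$ iff each coordinate of $\bigl(\sum_v x_v\mathbf{w}_{t,v}\bigr)/\bigl(\sum_v x_v d_{t,v}\bigr)$ lands in the rounding interval around the matching coordinate of $u$, and clearing the positive denominator turns this into a Boolean combination of linear equalities and inequalities among the $x_v$ --- precisely the count atoms that \logic{} allows. Combining the heads, adding the residual, and disjoining over the finitely many query values $t$ gives $\phi_{\ell+1,v}(p)$ as $\bigvee_t\bigl(\phi_{\ell,t}(p)\land(\text{a count formula})\bigr)$, and an updated position-$0$ indicator likewise; the count formulas mention only count variables from layers $\ell$ and $\ell-1$, so there is no circularity.

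Finally, the classifier accepts $w$ iff $A^{(L)}_{*,0}$ lies in the finite set $V = \{v : \text{sigmoid}(Wv+b)\ge\tfrac12\}$, so the recognized language is defined by the sentence that existentially quantifies all the count variables (over all layers), conjoins every binding formula $\counteq{x_v} p. \phi_{\ell,v}(p)$, and asserts that the layer-$L$ position-$0$ indicator names a value in $V$; \cref{thm:normal_form} may then be applied if a normal form is wanted. The step I expect to be the main obstacle is the self-attention case: one has to fix the fixed-precision semantics of the aggregation precisely enough that the per-position context vector is genuinely a function of the query's value and the counts $x_v$ (rather than of the order in which positions are summed), and then verify that ``the quotient rounds to $u$'' is really first-order over the counts --- which hinges on the denominator staying bounded away from $0$, so that multiplying through by it is sound.
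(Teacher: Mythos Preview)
Your proposal is correct and follows a genuinely different route from the paper. The paper does not track whole activation vectors by value; instead it represents each scalar activation \emph{bit by bit}, maintaining for every bit position $k$ a formula $\phi_k[p]$ (and a companion sentence $\omega_k$ for the \cls{} column) asserting that the $k$-th bit is $1$. All pointwise operations---FFNs, residuals, the logits $s_{qp}$, the final division and the sigmoid---are then handled uniformly by the trivial observation (\cref{thm:fixed_function}) that any function $\fixednums\to\fixednums$ or $\fixednums\times\fixednums\to\fixednums$ can be tabulated over its finite domain. For self-attention the paper rewrites $c_q$ as a ratio of two averages $\tfrac{1}{n'}\sum_p(\cdots)$ and introduces an ``extra-precision'' number format carrying one additional base-$n'$ digit, so that such an average can be computed exactly: a counting quantifier supplies the sum of the $k$-th bits across positions, this sum is placed in the extra digit, and repeated extra-precision additions and shifts assemble the full average before a final rounding. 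Your approach bypasses all of this arithmetic by observing that, once the query's value $t$ is fixed, the unrounded numerator and denominator of $c_q$ are already \emph{linear} in the value-counts $x_v$, so the rounding-interval condition clears the positive denominator to become a Boolean combination of the linear (in)equalities that \logic{} already has as atoms. The trade-off: your construction is conceptually shorter and uses the count sort of the logic more directly, but it introduces on the order of $|\fixednums|^d$ count variables per layer (one per possible activation vector), whereas the paper's bit-level encoding keeps the number of formulas polynomial in $d$ and the precision before tabulation---though the paper's own complexity analysis concedes that the tabulation step can itself be exponential in the precision, so neither construction is small. Your closing caveat about fixing the aggregation semantics so that $c_q$ depends only on $t$ and the counts is exactly right and mirrors the paper's explicit assumption that the averages in \cref{eq:attention_as_average} are computed exactly and then rounded.
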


We don't specify exactly when and how a fixed-precision transformer performs rounding. Our translation to \logic{] for the most part can accommodate any rounding scheme, except that in \cref{eq:attention_as_average} below, we assume that the averages over positions $p$ are computed exactly, then rounded.

\subsection{Representing numbers}

Following \citet{merrill+sabharwal:2023}, we use a representation of real numbers with both limited precision and limited range. Limited precision might be justified by the fact that the numbers computed by a neural network are subject to noise (e.g., from randomness in sampling the training data and parameter optimization); limited range is justified by the observation by \citet{hahn:2020} that if all the functions used in a transformer are Lipschitz continuous, then the absolute value of all activations has an upper bound not depending on $n$. (\Cref{sec:layer_norm_upper} re-proves this result in the presence of layer normalization, which is not in general Lipschitz continuous.)

While \citet{merrill+sabharwal:2023} use floating point numbers ($p/2$ bits for the mantissa and $p/2$ bits for the exponent, where $p = 16$ or $32$), we use a fixed-point representation, with $\intbits$ for the integer part and $\fracbits$ for the fractional part. There is no loss of generality, because a floating-point number with a $p/2$-bit mantissa and $p/2$-bit exponent can be converted exactly to a fixed-point number with $p/2 + 2^{p/2}$ bits.

\begin{definition}
  A \emph{fixed-precision number} with $\intbits$ integer bits and $\fracbits$ fractional bits is a number in $\fixednums_{\intbits,\fracbits} = \{i/2^{\fracbits} \mid -2^{\intbits+\fracbits} \le i < 2^{\intbits+\fracbits} \}$.
  Since $\intbits$ and $\fracbits$ are fixed, we normally just write $\fixednums$ in place of $\fixednums_{\intbits,\fracbits}$.
\end{definition}

We write $\bit{i}{x}$ for the $i$-th bit in the two's-complement representation of $x$. That is,
\begin{equation*}
  \bit{i}{x} = \left\lfloor \frac{x}{2^i} \right\rfloor - 2 \left\lfloor \frac{x}{2^{i+1}} \right\rfloor
\end{equation*}
where $\lfloor x\rfloor$ is the greatest integer less than or equal to $x$.

A neural network, given input $\cls\cdot w$, computes many real-valued activations, which we can think of as functions $a \colon \alphabet^* \rightarrow \fixednums$. For each activation $a$, we will write sentences that test bits of $a(w)$.

\begin{definition}
  If $a \colon \alphabet^\ast \rightarrow \fixednums_{\intbits,\fracbits}$, we say that $a$ is \emph{defined} by sentences $\langle\sentence^a_k\rangle_{k \in [-\fracbits, \intbits]}$ (or just $\langle\sentence^a_k\rangle$ for short) if, for all $k \in [-\fracbits, \intbits]$, $w \models \sentence^a_k$ iff $\bit{k}{a(w)} = 1$.
  
    Similarly, if $\mathbf{a} \colon \alphabet^n \rightarrow \fixednums^{n'}$, we say that $\mathbf{a}$ is defined by $\langle \formula^{\mathbf{a}}_k[p], \omega^{\mathbf{a}}_k\rangle$ if
    $[\mathbf{a}(w)]_p$ is defined by $\langle \phi^{\mathbf{a}}_k[p]\rangle$ and
    $[\mathbf{a}(w)]_{0}$ is defined by $\langle \omega^{\mathbf{a}}_k\rangle$.
\end{definition}

The finiteness of $\fixednums$ ensures the following fact, which we use repeatedly:
\begin{proposition} \label{thm:fixed_function}
  If $a \colon \alphabet^\ast \rightarrow \fixednums$ is defined by $\langle\sentence^a_k\rangle$, then for any function $f \colon \fixednums \rightarrow \fixednums$ there are sentences that define $f \circ a$.
  Similarly, if $b \colon \alphabet^\ast \rightarrow \fixednums$ is defined by $\langle\sentence^b_k\rangle$, and $g \colon \fixednums \times \fixednums \rightarrow \fixednums$, there are sentences that define the function $g \circ (a, b)$ (that is, the function that maps $w$ to $g(a(w), b(w))$).
\end{proposition}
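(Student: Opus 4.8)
The plan is to exploit the finiteness of $\fixednums$ to reduce everything to finite Boolean combinations of the sentences we are handed. First, for each value $v \in \fixednums$, I would build a sentence $\sentence^{a=v}$ that holds exactly when $a(w) = v$. Since a fixed-precision number is uniquely determined by its fixed-width two's-complement bits, and since $a(w) \in \fixednums$ as well, we have $a(w) = v$ iff $\bit{k}{a(w)} = \bit{k}{v}$ for every $k \in [-\fracbits,\intbits]$. Using that $a$ is defined by $\langle\sentence^a_k\rangle$, this is captured by
\[
  \sentence^{a=v} = \bigwedge_{k \,:\, \bit{k}{v}=1} \sentence^a_k \;\land\; \bigwedge_{k \,:\, \bit{k}{v}=0} \neg\sentence^a_k,
\]
a finite conjunction of sentences, hence a sentence. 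For the binary case, $\sentence^b_v$ is built analogously from $\langle\sentence^b_k\rangle$, and $\sentence^{a=u} \land \sentence^{b=v}$ holds iff $a(w)=u$ and $b(w)=v$.

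Next I would assemble the target sentences by cases on the (finitely many) inputs. For the unary case, $\bit{k}{f(a(w))} = 1$ iff $a(w) = v$ for some $v \in \fixednums$ with $\bit{k}{f(v)} = 1$, so I set
\[
  \sentence^{f\circ a}_k = \bigvee_{v \in \fixednums \,:\, \bit{k}{f(v)}=1} \sentence^{a=v},
\]
a finite disjunction because $\fixednums$ is finite; then $\langle\sentence^{f\circ a}_k\rangle$ defines $f \circ a$. For the binary case, identically,
\[
  \sentence^{g\circ(a,b)}_k = \bigvee_{(u,v)\,\in\,\fixednums\times\fixednums \,:\, \bit{k}{g(u,v)}=1} \bigl(\sentence^{a=u} \land \sentence^{b=v}\bigr),
\]
which is again a finite disjunction, and $\langle\sentence^{g\circ(a,b)}_k\rangle$ defines $g\circ(a,b)$.

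I do not expect a genuine obstacle here; the content of the proposition is precisely that finiteness of $\fixednums$ trivializes the construction, and the work is bookkeeping: checking that the $\sentence^a_k$ having no free variables makes all the Boolean combinations above genuine \emph{sentences}, interpreting an empty disjunction (when $f$, resp.\ $g$, never sets bit $k$) as $\false$, and noting that "agreeing on all bits $k \in [-\fracbits,\intbits]$" is equality in $\fixednums$ only because the two's-complement encoding is of the fixed width $\intbits+\fracbits$. The write-up should be only a few lines.
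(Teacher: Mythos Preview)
Your proposal is correct and is precisely the approach the paper takes: the paper's entire proof is the one-line observation that since $\fixednums$ is finite, one can tediously write sentences testing all possible inputs and outputs. You have simply spelled out that bookkeeping explicitly.
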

\begin{proof}
  Because $\fixednums$ is finite, it is easy but tedious to write sentences that test for all possible inputs and outputs.
\end{proof}

\subsection{Input layer} \label{sec:input_layer}

The (function that maps $w$ to the) $i$-th component of $\text{WE}(w_p)$ is defined by
\begin{align*}
  \formula_k[p] &= \bigvee_{\mathclap{\substack{\letter \in \alphabet \\ \bit{k}{[\text{WE}(\letter)]_i} = 1}}} \isletter{\letter}{p}
\end{align*}
and $\omega_k$, which simply encodes the constant value $[\text{WE}(\cls)]_i$ in fixed-precision.

Sinusoidal PEs, rounded to the nearest fixed-precision number, can be described using modular predicates.
For any~$i$, there exists a period $m_i$ such that for all $p$, $[\text{PE}(p)]_i = [\text{PE}(p+m_i)]_i$.
Then the (function that maps $w$ to the) $i$-th component of $\text{PE}(p)$ is defined by
\begin{align*}
\formula_k[p] &= \bigvee_{\mathclap{\substack{0 \leq r < m_i \\ \bit{k}{[\text{PE}(r)]_i} = 1}}} \congruent{r}{m_i}{p}
\end{align*}
and $\omega_k$, which simply encodes the constant value $[\text{PE}(0)]_i$ in fixed-precision.

\subsection{Hidden layers} \label{sec:hidden_layers}

The position-wise FFNs and residual connections can all be defined using \cref{thm:fixed_function}.

In a self-attention layer, the logits \labelcref{eq:logits} can be defined using \cref{thm:fixed_function}. The equation for context vectors \labelcref{eq:context} can be rewritten in terms of averages (not sums, which could overflow):
\begin{align} \label{eq:attention_as_average}
  c_q &= \frac{\frac1{n'}\sum_p \exp s_{qp} \wv A_{*,p}}{\frac1{n'}\sum_{p} \exp s_{qp}}.
\end{align}
Everything in this equation except for averaging ($\frac1{n'}\sum_{p}$) can also be defined using \cref{thm:fixed_function}.
So the one operation that remains to be defined is averaging over $n'$ positions.
\Cref{sec:extra_prec} explains how to do this, one bit at a time. To sum bits across all positions, we use counting quantifiers; to divide by $n'$, we use another kind of number representation whose least-significant digits are $n'$ times smaller than those of fixed-precision numbers.

\begin{toappendix}
\section{Expressing averages in \logic{}}
\label{sec:extra_prec}

To compute averages across $n$ positions, we define a new kind of numeric representation, called \emph{extra-precision numbers}, that have an \emph{extra digit}, ranging from $0$ to $n$ with $\frac{1}{n'}$ of the place value of the least-significant bit of a fixed-precision number.

\begin{definition}
  An \emph{extra-precision number} with $\intbits$ integer bits, $\fracbits$ fractional bits, and extra digit with base $n'$ is a number in $\extranums_{\intbits,\fracbits,n'} = \left\{ \frac{i}{2^\fracbits n'} \bigm| -2^{\intbits+\fracbits}n' \le i < 2^{\intbits+\fracbits}n' \right\}$. For any $a \in \extranums_{\intbits, \fracbits, n'}$, we define
  \begin{align*}
    \extra{a} &= \floor{a \cdot 2^\fracbits \cdot n'} - \floor{a \cdot 2^\fracbits} \cdot n' \\
    \fixed{a} &= \floor{a \cdot 2^\fracbits} \cdot 2^{-\fracbits}.
  \end{align*}
\end{definition}

For example, in $\extranums_{1,2,3}$, the extra digit has place value $\frac{1}{2^2 \cdot 3} = \frac{1}{12}$. The number $\frac{17}{12}$ belongs to $\extranums_{1,2,3}$ and can be represented as $01.012$, because $\frac{17}{12} = 1 \cdot 1 + 0 \cdot \frac12 + 1 \cdot \frac14 + 2 \cdot \frac1{12}$. Then $\extra{\frac{17}{12}} = 2$ and $\fixed{\frac{17}{12}} = \frac{5}{4}$ (or $01.01$ in binary). Its negation, $-\frac{17}{12}$, can be represented as $10.101$. The sign bit can be thought of as having place value $-2$, and $-\frac{17}{12} = -1 \cdot 2 + 0 \cdot 1 + 1 \cdot \frac12 + 0 \cdot \frac14 + 1 \cdot \frac1{12}$.

\begin{definition}
  If $a \colon \alphabet^* \rightarrow \extranums_{\intbits,\fracbits,n'}$, we say that $a$ is \emph{defined} by sentences $\sentence^a_k$ for $k \in [-\fracbits, \intbits]$ and $\formula^a_\extradigit[x]$ (or $\langle \sentence^a_k, \formula^a_\extradigit[x]\rangle$ for short) if $\langle\sentence^a_k\rangle$ defines $\fixed{a(w)}$, and $w \models \formula^a_\extradigit[x]$ iff $x = \extra{a(w)}$.
\end{definition}

Adding or subtracting two extra-precision numbers is possible because \logic{} has addition of count variables:
\begin{proposition} \label{thm:extra_addition}
  Let $a \colon \alphabet^* \rightarrow \extranums_{\intbits,\fracbits,n'}$ be defined by $\langle\sentence^a_k, \formula^a_\extradigit[x]\rangle$ and $b \colon \alphabet^* \rightarrow \extranums_{\intbits,\fracbits,n'}$ be defined by $\langle\sentence^b_k, \formula^b_\extradigit[x]\rangle$. If, for all $w$, $(a+b)(w) \in \extranums_{\intbits,\fracbits,n'}$, then $(a+b)$ is defined by some $\langle\sentence^{a+b}_k, \formula^{a+b}_\extradigit\rangle$, and similarly for $(a-b)$.
\end{proposition}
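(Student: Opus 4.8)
The plan is to separate the computation into a genuinely unbounded part — the extra digits, which range up to $n$ — and a bounded part — the fixed-point portions together with a single carry (for $a+b$) or borrow (for $a-b$) bit. The extra digits I handle with count arithmetic, which \logic{} provides; the bounded part I hand to \cref{thm:fixed_function}. Writing $a(w) = \fixed{a(w)} + \extra{a(w)}\cdot\tfrac{1}{2^\fracbits n'}$ and likewise for $b$, I would first record the elementary facts, proved by a short calculation: since $0 \le \extra{a(w)} + \extra{b(w)} \le 2n < 2n'$, we may write $\extra{a(w)} + \extra{b(w)} = \kappa n' + r$ with $\kappa \in \{0,1\}$ and $0 \le r < n'$, and then $\extra{(a+b)(w)} = r$ while $\fixed{(a+b)(w)} = \fixed{a(w)} + \fixed{b(w)} + \kappa\,2^{-\fracbits}$; symmetrically, $\extra{a(w)} - \extra{b(w)} = -\beta n' + r'$ for some borrow bit $\beta \in \{0,1\}$ and $0 \le r' < n'$, with $\extra{(a-b)(w)} = r'$ and $\fixed{(a-b)(w)} = \fixed{a(w)} - \fixed{b(w)} - \beta\,2^{-\fracbits}$. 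The hypothesis $(a+b)(w) \in \extranums_{\intbits,\fracbits,n'}$ guarantees $\fixed{(a+b)(w)} \in \fixednums$, so no overflow occurs in the fixed-point part (and similarly for $a-b$).

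Next I would write the count formulas. Because $\counteq{y}p.\true$ holds exactly when $y = n$, the term $y+1$ denotes $n'$; using that $\formula^a_\extradigit$ and $\formula^b_\extradigit$ pin down the extra digits, that \logic{} has addition of count terms, and that $t_1 \ge t_2$ abbreviates $\neg(t_1 < t_2)$, the extra digit of $a+b$ is defined by
\[
\formula^{a+b}_\extradigit[x] = \exists y. \exists x_a. \exists x_b. \Bigl( \counteq{y}p.\true \land \formula^a_\extradigit[x_a] \land \formula^b_\extradigit[x_b] \land \bigl( (x_a + x_b < y+1 \land x = x_a + x_b) \lor (x_a + x_b \ge y+1 \land x = x_a + x_b - y - 1) \bigr) \Bigr),
\]
where all the equalities and inequalities are between legal count terms (e.g.\ $x_a + x_b - y - 1$ is $1\cdot x_a + 1\cdot x_b + (-1)\cdot y + (-1)$). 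The two disjuncts are the no-carry and carry cases, exactly one applies for each $w$, so $\formula^{a+b}_\extradigit[x]$ holds iff $x = \extra{(a+b)(w)}$. The carry is the truth value of the sentence $\sentence^\kappa$ obtained by deleting the ``$x = \cdots$'' conjuncts and keeping the guard $x_a + x_b \ge y+1$. Subtraction is the same with $x = x_a + x_b$ replaced by $x = x_a - x_b$, with $x = x_a + x_b - y - 1$ replaced by $x = x_a - x_b + y + 1$, with the guard $x_a + x_b < y+1$ replaced by $x_a \ge x_b$, and with the borrow bit $\beta$ read off from $x_a < x_b$.

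Finally, the fixed-point part. The carry bit $\kappa$ is already defined bitwise: the $(-\fracbits)$-th bit of the fixed-precision number $\kappa\,2^{-\fracbits}$ is $\sentence^\kappa$ and all other bits are $\false$. The map $(u,v,c) \mapsto u + v + c\,2^{-\fracbits}$, from $\fixednums \times \fixednums \times \{0,1\}$ into $\fixednums$ (clamping any out-of-range output to an arbitrary element of $\fixednums$), is a total function on a finite set, and by the calculation above it agrees with $\fixed{(a+b)}$ on every triple that actually arises here. Hence by \cref{thm:fixed_function}, taken in its obvious several-argument form — which has exactly the same (``easy but tedious'') proof, an exhaustive case split over the finite set $\fixednums$ — there are sentences $\langle\sentence^{a+b}_k\rangle$ defining $\fixed{(a+b)}$ from $\langle\sentence^a_k\rangle$, $\langle\sentence^b_k\rangle$ and $\sentence^\kappa$, and together with $\formula^{a+b}_\extradigit$ these define $a+b$; subtraction is identical with $\kappa$ replaced by $\beta$ and $+\,c\,2^{-\fracbits}$ by $-\,c\,2^{-\fracbits}$. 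The one point requiring care is here: one cannot instead compose two binary applications of \cref{thm:fixed_function} (say, first form $\fixed{a} + \fixed{b}$, then add the carry), because $\fixed{a} + \fixed{b}$ can fall just outside $\fixednums$ even when $\fixed{(a+b)}$ does not; routing the fixed part through a single ternary function of $(\fixed{a}, \fixed{b}, \kappa)$ sidesteps this, and everything else is direct calculation plus the observation that naming $n'$ and adding counts are precisely the features \logic{} supplies.
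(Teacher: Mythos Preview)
Your proof is correct and, for the addition case, essentially the same as the paper's: both split the computation into the extra digit (handled by count arithmetic with a carry test against $n' = n+1$) and the fixed-point part (handled by \cref{thm:fixed_function}). The paper organizes the fixed-point step as a case split---it invokes \cref{thm:fixed_function} once for $\fixed{a}+\fixed{b}$ (no carry) and once for $\fixed{a}+\fixed{b}+2^{-\fracbits}$ (carry), then selects between them via the carry predicate---whereas you package it as a single ternary function of $(\fixed{a},\fixed{b},\kappa)$. These amount to the same thing, and the paper's two-branch organization already avoids the intermediate-overflow pitfall you flag: each branch only needs to be correct on inputs where that branch is actually selected, and there $\fixed{a\pm b}\in\fixednums$ by hypothesis.

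For subtraction the routes genuinely diverge. You handle $a-b$ directly with a borrow bit $\beta=\indicator{\extra{a}<\extra{b}}$, which is clean and symmetric with the addition case. The paper instead reduces to addition by defining negation in the extra-precision representation (invert all digits---for the extra digit this means $n-\extra{b}$---then add one), mirroring two's-complement negation. Your approach is shorter and avoids introducing negation as a separate gadget; the paper's approach has the minor conceptual benefit of making the two's-complement structure of $\extranums$ explicit. Either is fine.
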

\begin{proof}
First, the extra digit of $(a+b)(w)$ is obtained by adding the extra digits of $a(w)$ and $b(w)$; if the sum is $n'$ or more, then subtract $n'$ and carry a $1$ to the least-significant bit. Thus, let
\begin{align*}
  \formula^{a+b}_\extradigit[z] &= \exists x. \exists y. \exists n. \Bigl(\begin{aligned}[t] &\formula^a_\extradigit[x] \land \formula^b_\extradigit[y] \land \counteq{n} p. \true \\
  &\quad \land \bigl(\underbrace{(x+y<n+1 \land z=x+y)}_{\text{no carry}} {} \lor \underbrace{(x+y\ge n+1 \land z=x+y-(n+1))}_{\text{carry}}\bigr) \Bigr).\end{aligned}
\end{align*}
As for the remaining bits, by \cref{thm:fixed_function}, $\fixed{a}+\fixed{b}$ is definable by some $\langle\sentence^0_k\rangle$. Similarly, if there was a carry from the extra digit, $\fixed{a}+\fixed{b}+1$ is definable by some $\langle\sentence^1_k\rangle$. Then let
\begin{align*}
  \sentence^{a+b}_k &= \exists x. \exists y. \exists n. \Bigl( \formula^a_\extradigit[x] \land \formula^b_\extradigit[y] \land \counteq{n} p. \true
  \land \bigl( \underbrace{(x + y < n+1 \land \sentence^0_k)}_{\text{no carry}} {} \lor \underbrace{(x + y \ge n+1 \land \sentence^1_k)}_{\text{carry}} \bigr) \Bigr).
\end{align*}

To get subtraction, it suffices to define negation. Recall that in two's complement representation, negation means inverting all the digits and adding 1. For the extra digit, ``inverting'' means subtracting from $n$. If the extra digit is $0$, invert it to get $n$, then increment it to $0$ with a carry to the least-significant bit.
\begin{align*}
  \formula^{-b}_\extradigit[z] &= \exists y. \exists n. \Bigl(\formula^b_\extradigit[y] \land \counteq{n} p. \true \land \bigl({\underbrace{(y > 0 \land z = n+1 - y)}_{\text{no carry}}} \lor {\underbrace{(y = 0 \land z = 0)}_{\text{carry}}}\bigr) \Bigr).
\end{align*}
Write $\bitneg$ for bitwise negation.
By \cref{thm:fixed_function}, $\bitneg\fixed{b}$ is definable by some $\langle\sentence^0_k\rangle$, and $\bitneg\fixed{b}+1$ is definable by some $\langle\sentence^1_k\rangle$. Then
\begin{align*}
  \sentence^{-b}_k &= \exists y. \exists n. \left(\formula^b_\extradigit[y] \land \counteq{n} p. \true \land \bigl({\underbrace{(y > 0 \land \sentence^0_k)}_{\text{no carry}}} \lor {\underbrace{(y = 0 \land \sentence^1_k)}_{\text{carry}}} \bigr) \right). \tag*{\qedhere}
\end{align*}
\end{proof}

Finally, we can show how to define averages over $n'$ positions.
\begin{proposition}  
  Given a function $\mathbf{a} : \alphabet^n \rightarrow \fixednums_{\intbits,\fracbits}^{n'}$,
  defined by $\langle\formula^\mathbf{a}_k[p], \omega^{\mathbf{a}}_k\rangle$,
  the function
  \begin{align*}
    \bar{\mathbf{a}} \colon \alphabet^* &\rightarrow \extranums_{\intbits,\fracbits,n'} \\
    w &\mapsto \frac1{n'} \sum_{i=1}^{n'} [\mathbf{a}(w)]_p
  \end{align*}
  is approximated with error at most $2^{-\fracbits}$ by a function that
  is definable by some $\langle \sentence^{\bar{\mathbf{a}}}_k\rangle$.
\end{proposition}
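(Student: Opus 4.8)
The plan is to express the \emph{exact} average as an extra-precision number, using counting quantifiers to sum bits across positions and a bounded disjunction over the possible value of the quotient to carry out the division by $n'$ — which cannot be done directly, since \logic{} has only \emph{linear} arithmetic on count variables.

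First, for each bit position $k \in [-\fracbits, \intbits]$, I would count the positions carrying a $1$ in bit $k$. The counting quantifier $\counteq{X_k}p.\formula^{\mathbf{a}}_k[p]$ ranges only over $p \in [1,n]$, so I add to $X_k$ the constant contribution of position $0$, which is $1$ or $0$ according to whether $w \models \omega^{\mathbf{a}}_k$; call the result $Y_k \in [0, n']$. Writing each $[\mathbf{a}(w)]_p$ in two's complement, $\sum_{p=0}^{n}[\mathbf{a}(w)]_p = 2^{-\fracbits} M$ where $M = \sum_{k=-\fracbits}^{\intbits-1} 2^{k+\fracbits} Y_k - 2^{\intbits+\fracbits} Y_{\intbits}$ is an integer. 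Crucially, $M$ is a \emph{linear} combination of the $Y_k$ (equivalently of the $X_k$ and $n = n'-1$), and $0 \le Y_k \le n'$ gives $-2^{\intbits+\fracbits} n' \le M < 2^{\intbits+\fracbits} n'$. Hence the exact average is $\tfrac{M}{2^{\fracbits} n'} \in \extranums_{\intbits,\fracbits,n'}$: there is nothing to round, and it suffices to define this number exactly (which has error $0 \le 2^{-\fracbits}$).

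To read off its bits and extra digit, observe that $\fixed{\tfrac{M}{2^{\fracbits} n'}} = 2^{-\fracbits}\floor{M/n'}$ and $\extra{\tfrac{M}{2^{\fracbits} n'}} = M - n'\floor{M/n'}$, so everything reduces to the quotient $i = \floor{M/n'}$ and remainder $e = M - i\,n'$. We cannot name $\floor{M/n'}$ with a term, as that would multiply two count variables. But the bound on $M$ forces $i$ into the \emph{fixed finite} set $\{-2^{\intbits+\fracbits}, \dots, 2^{\intbits+\fracbits}-1\}$, so I branch over all such $i$: the branch for a given $i$ asserts the linear condition $0 \le M - i\,n' < n'$ (now $i\,n' = i\,n + i$ is a legal term, $i$ being a constant), which by the division algorithm holds for exactly one $i$. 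In that branch $e := M - i\,n'$ is the extra digit, and the fixed-precision bits of $2^{-\fracbits} i$ are determined — the $k$-th one is just $\bit{k+\fracbits}{i}$. Thus $\sentence^{\bar{\mathbf{a}}}_k$ is a disjunction, over all $i$ with $\bit{k+\fracbits}{i} = 1$ and all Boolean patterns of the sentences $\omega^{\mathbf{a}}_j$, of a formula $\exists X_{-\fracbits}.\cdots\exists X_{\intbits}.\exists n.\bigl(\counteq{n}p.\true \land \bigwedge_j \counteq{X_j}p.\formula^{\mathbf{a}}_j[p] \land (\text{that pattern}) \land 0 \le M - i\,n' < n'\bigr)$, and $\formula^{\bar{\mathbf{a}}}_{\extradigit}[z]$ is the same with the extra conjunct $z = M - i\,n'$.

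The main obstacle is precisely the division: because \logic{} forbids multiplying two count variables, $\floor{M/n'}$ has no term denoting it, and the crux of the argument is that the finite range of $M/n'$ lets us replace that computation by a constant-size disjunction over the candidate quotient, after which all arithmetic is linear. The remaining bookkeeping — folding in the constant position-$0$ term, extracting the bits of $2^{-\fracbits} i$, and verifying the range bound on $M$ — is routine.
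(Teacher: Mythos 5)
Your proof is correct, but it takes a genuinely different route from the paper's. The paper decomposes the average bit by bit: each quantity $v_k = \frac{1}{2^{\fracbits}n'}\sum_{p}\bit{k}{[\mathbf{a}(w)]_p}$ is realized as an extra-precision number whose fixed part is zero and whose extra digit is the bit count (a counting quantifier plus the constant contribution of position $0$ from $\omega^{\mathbf{a}}_k$), and the full average is then assembled by repeated application of the extra-precision addition lemma (\cref{thm:extra_addition}), which handles all carry propagation from the extra digit into the fixed bits; the multiplications by powers of $2$ become repeated additions. You instead collapse everything into a single integer $M$, linear in the bit counts, observe that the exact average $M/(2^{\fracbits}n')$ already lies in $\extranums_{\intbits,\fracbits,n'}$, and replace the inexpressible quotient $\floor{M/n'}$ by a finite disjunction over its $2^{\intbits+\fracbits+1}$ candidate values $i$, each branch certified by the linear condition $0 \le M - in' < n'$. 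This is sound: the only nonlinear step is the division by $n'$, the bound $-2^{\intbits+\fracbits}n' \le M < 2^{\intbits+\fracbits}n'$ confines the quotient to a range independent of $n$, and within each branch the fixed bits (the bits of $i$) are constants while the extra digit $M - in'$ is a legal term. Your route dispenses with the carry machinery of \cref{thm:extra_addition} entirely and delivers the average exactly (error $0$, which trivially meets the stated $2^{-\fracbits}$ tolerance), arguably a tighter and cleaner argument; the paper's route is more modular, reducing the construction to a reusable addition primitive on extra-precision numbers. Both rest on the same essential ingredients: counting quantifiers to sum bits across positions, and the finiteness of the fixed-precision range to keep the residual case analysis constant-size.
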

\begin{proof}
  Observe that
  \begin{align}
    [\mathbf{a}(w)]_p &= -\bit{\intbits}{[\mathbf{a}(w)]_p} \cdot 2^\intbits + \sum_{k=-\fracbits}^{\intbits-1} \bit{k}{[\mathbf{a}(w)]_p} \cdot 2^k \notag \\
\intertext{so the average can be written as}
    \bar{\mathbf{a}}(w) &= \frac1{n'} \sum_{p=0}^{n'} [\mathbf{a}(w)]_p \notag \\
    &= -\frac1{n'} \sum_{p=0}^{n'} \bit{\intbits}{[\mathbf{a}(w)]_p} \cdot 2^\intbits + \sum_{k=-\fracbits}^{\intbits-1} \frac1{n'} \sum_{p=0}^{n'} \bit{k}{[\mathbf{a}(w)]_p} \cdot 2^k \notag \\
    &= -\underbrace{\frac1{2^{\fracbits}n'} \sum_{p=0}^{n'} \bit{\intbits}{[\mathbf{a}(w)]_p}}_{v_\intbits} {} \cdot 2^{\intbits+\fracbits} + \sum_{k=-\fracbits}^{\intbits-1} {} \underbrace{\frac1{2^{\fracbits}n'} \sum_{p=0}^{n'} \bit{k}{[\mathbf{a}(w)]_p}}_{v_k} {} \cdot 2^{k+\fracbits}. \label{eq:average}
  \end{align}
  Each $v_k$ is the sum of all the $k$-th bits, written in the extra digit's place. It is defined by
  \begin{align*}
    \sentence^{v_k}_k &= \false \\
    \formula^{v_k}_\extradigit[x] &= \exists y. \left ( (\omega^{\mathbf{a}}_k  \land x = y+1 \lor \neg \omega^{\mathbf{a}}_k \land x = y) \land \counteq{y} p.\formula^{\mathbf{a}}_k[p] \right).
  \end{align*}
  Then use \cref{thm:extra_addition} to define \cref{eq:average}. The multiplications by powers of 2 can be accomplished by repeated addition, as can the summation over $k$.
\end{proof}
\end{toappendix}

\subsection{Output layer} \label{sec:output_layer}

By composing the constructions from the preceding sections, we obtain formulas $\langle \phi_k[p], \omega_k\rangle$ that define the output of the encoder at all positions. We are only interested in the output at \cls, which is defined by $\langle \omega_k\rangle$, so we can discard the $\phi_k[p]$.
By \cref{thm:fixed_function}, we can define the top-level sentence, which applies the final sigmoid layer \labelcref{eq:output_layer} to $\langle \omega_k\rangle$ and tests whether the result is at least $\tfrac12$.

\subsection{Complexity analysis}

The sentence constructed by \cref{thm:upper} would be quite large if written out in full, because of repeated subformulas. We analyze its size assuming that repeated subformulas can share space.

In the input layer (\cref{sec:input_layer}), the word embeddings translate to subformulas with total size $O(|\alphabet|d(\intbits+\fracbits))$, and the positional encodings, $O(md(\intbits+\fracbits))$, where $m$ is the maximum period of any component of $\text{PE}$.
In the hidden layers (\cref{sec:hidden_layers}), the position-wise FFNs and residual connections translate into subformulas with total size $O(L d^2 F)$, where $F$ is the maximum size of a subformula constructed by \cref{thm:fixed_function}, which in the worst case could be exponential in the precision ($\intbits+\fracbits$).
The attention layers translate into subformulas with total size $O(L H d^2 F)$. Finally, the output layer (\cref{sec:output_layer}) translates into a subformula of size~$O(d^2 F)$.

\subsection{Relationship to uniform \tczero}
\label{sec:tc0}

We conclude this section by showing that \logic{} is strictly less expressive than uniform \tczero and therefore a tighter upper bound on fixed-precision transformer encoders than that of \citet{merrill+sabharwal:2023}. (On the other hand, their proof applies to a much more general class of neural networks than ours does.)

(Non-uniform) \tczero{} is the class of families of Boolean circuits with majority gates, unlimited fan-in, polynomial size, and constant depth. By \emph{uniform} \tczero{} we mean circuit families in~\tczero{} whose connections can be decided in logarithmic time \citep{barrington+:1990}.

\begin{proposition} \label{thm:tc0}
  The language {\normalfont $\{\texttt{0}^n \texttt{1}^n \mid n \ge 0\}$} is in uniform \tczero{} but not definable in \logic.
\end{proposition}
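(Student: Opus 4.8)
I would prove the two halves separately; the first is routine and the second is the substance.

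\emph{Membership in uniform $\tczero$.} The language $L = \{\texttt{0}^n\texttt{1}^n \mid n \ge 0\}$ is the intersection of ``$w \in \texttt{0}^*\texttt{1}^*$'' and ``$w$ has as many $\texttt{0}$'s as $\texttt{1}$'s.'' The first condition says no $\texttt{1}$ precedes a $\texttt{0}$, which is expressible in first-order logic with order and hence recognized by a logtime-uniform $\mathsf{AC}^0$ circuit family; the second is a comparison of two bit-counts, computable by a logtime-uniform family of constant-depth threshold circuits. Taking the AND of the two circuits places $L$ in uniform $\tczero$.

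\emph{Undefinability in $\logic$.} Here the plan is to use the normal form (\cref{thm:normal_form}) to show that the truth of any sentence depends only on a finite ``residue--letter histogram'' of the input, and then to collide two strings with the same histogram, one in $L$ and one not. Let $\sigma$ be a sentence of $\logic$; put it in the form $\exists x_1.\dots\exists x_k.\bigl(\bigwedge_i \counteq{x_i}p.\psi_i \land \chi\bigr)$ with each $\psi_i$ quantifier-free and without free count variables, and $\chi$ quantifier-free. Since $\sigma$ is a sentence, the only position variable that can occur free in $\psi_i$ is the $p$ bound by $\counteq{x_i}p$ (any other would be free in $\sigma$), and $\psi_i$ has no count variables, so $\psi_i$ is a Boolean combination of atoms $Q_a(p)$ and $\congruent{r}{m}{p}$; likewise $\chi$ is a Boolean combination of linear (in)equalities over $x_1,\dots,x_k$. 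Let $M$ be the least common multiple of all moduli $m$ occurring in $\sigma$. Then whether $\psi_i(p)$ holds in $w$ depends only on the pair $(p \bmod M,\; w_p)$. Writing $N^w_{j,a} = |\{p \in [1,n] : p \equiv j \pmod{M},\ w_p = a\}|$, the quantity $c_i(w) := |\{p \in [1,n] : w \models \psi_i(p)\}|$ is therefore a fixed $\{0,1\}$-linear combination of the numbers $N^w_{j,a}$, and $w \models \sigma$ iff $\chi(c_1(w),\dots,c_k(w))$ holds. Hence the truth of $\sigma$ on $w$ is a function of the tuple $\bigl(N^w_{j,a}\bigr)_{0 \le j < M,\ a \in \alphabet}$ alone.

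It remains to exhibit, for this $M$, two strings with the same tuple, one in $L$ and one not. I would take any $t \ge 2$ and set $w = \texttt{0}^{Mt}\texttt{1}^{Mt} \in L$ and $w' = (\texttt{0}^M\texttt{1}^M)^t \notin L$ (the latter is not of the form $\texttt{0}^*\texttt{1}^*$ once $t \ge 2$). Both have length $2Mt$. In $w$, the first $Mt$ positions (all $\texttt{0}$) form $t$ complete residue systems mod $M$ and the last $Mt$ (all $\texttt{1}$) likewise, so $N^w_{j,\texttt{0}} = N^w_{j,\texttt{1}} = t$ for every $j$. In $w'$, each of the $t$ blocks $\texttt{0}^M\texttt{1}^M$ contributes exactly one $\texttt{0}$-position and one $\texttt{1}$-position in each residue class, so again $N^{w'}_{j,\texttt{0}} = N^{w'}_{j,\texttt{1}} = t$. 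Thus $N^w_{j,a} = N^{w'}_{j,a}$ for all $j,a$, whence $w \models \sigma \iff w' \models \sigma$ for every sentence $\sigma$; since $w \in L$ and $w' \notin L$, no sentence of $\logic$ defines $L$.

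\textbf{Main obstacle.} The crux is the second half: correctly reading off from the normal form that every sentence's truth factors through the finite statistic $(N^w_{j,a})$ — i.e., pinning down that the $\psi_i$ are monadic Boolean combinations of letter and modular predicates (using sentence-hood to kill stray position variables) and that $\chi$ involves only the count variables. Once that is established, the construction of $w$ and $w'$ is elementary, the only care being to take $t \ge 2$ so that $w'$ genuinely leaves $L$.
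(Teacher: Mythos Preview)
Your proof is correct. The underlying idea for undefinability is the same as the paper's --- a sentence of \logic{} can only ``see'' a position through its letter and its residue class modulo the moduli occurring in the sentence --- but you and the paper execute it differently. The paper argues this directly and tersely (without invoking \cref{thm:normal_form}): it lets $M$ be the product of the moduli in $\sigma$, observes that $\sigma$ cannot distinguish positions that agree mod $M$, and produces its colliding pair by a single swap, taking $w=\texttt{0}^M\texttt{1}^M$ and $w'=\texttt{1}\,\texttt{0}^{M-1}\,\texttt{0}\,\texttt{1}^{M-1}$ (i.e., swapping the letters at positions $1$ and $M{+}1$). Your route through the normal form makes the ``depends only on the residue--letter histogram'' claim fully explicit and yields a cleaner invariant statement; your witnesses $w=\texttt{0}^{Mt}\texttt{1}^{Mt}$ versus $w'=(\texttt{0}^M\texttt{1}^M)^t$ are a perfectly good alternative, with the minor cost of needing $t\ge 2$. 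For membership in uniform \tczero{}, the paper phrases the same two-part decomposition in logical terms (via the equivalence with first-order plus majority quantifiers), whereas you phrase it in circuit terms; these are interchangeable.
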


\begin{proof}
  For inclusion in uniform $\tczero$, we use the fact that uniform $\tczero$ is equivalent to first-order logic with majority quantifiers, addition, and multiplication, and that majority quantifiers can simulate counting quantifiers \citep[p.~296]{barrington+:1990}. Then $\{\texttt{0}^n \texttt{1}^n\}$ is defined by
  \begin{align*}
    &\exists x. \bigl(\counteq{x} p. \isletter{\texttt{0}}{p} \land \counteq{x} p. \isletter{\texttt{1}}{p}\bigr) \\ & \quad \land \forall p. \forall q. \bigl(\isletter{\texttt{0}}{p} \land \isletter{\texttt{1}}{q} \rightarrow p < q\bigr).
  \end{align*}

  For non-definability in \logic, suppose that $\{\texttt{0}^n \texttt{1}^n\}$ is definable in \logic{} by some sentence~$\sentence$. Let $M$ be the product of all moduli $m$ used in atomic formulas $\congruent{m}{r}{p}$ used in $\sentence$. Then $\sentence$ cannot distinguish between positions $p$ and $(p+M)$, so it cannot distinguish $w = \texttt{0}^M \texttt{1}^M$ and $w' = \texttt{1} \texttt{0}^{M-1} \texttt{0} \texttt{1}^{M-1}$. Since $w \models \sentence$, it must be the case that $w' \models \sentence$, which is a contradiction.
\end{proof}

\section{From \logic{} to Transformers}
\label{sec:lower}

In this section, we prove the following theorem, which sets a lower bound on the expressivity of (arbitrary-precision) transformer classifiers.

\begin{theorem} \label{thm:lower}
  Every language that is definable by a sentence of \logic{} is also recognizable by a transformer classifier.
\end{theorem}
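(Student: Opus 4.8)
The plan is to collapse an arbitrary sentence $\sentence$ of \logic{} to a very simple shape using \cref{thm:normal_form}, and then build a transformer classifier that implements that shape almost directly. By \cref{thm:normal_form}, $\sentence$ is equivalent to $\exists x_1. \dots \exists x_k.\bigl(\bigwedge_i \counteq{x_i} p. \psi_i \land \chi\bigr)$, where each $\psi_i$ is quantifier-free with $p$ as its only free variable (hence a Boolean combination of predicates $\isletter{\letter}{p}$ and $\congruent{r}{m}{p}$), and $\chi$ is a quantifier-free formula over the count variables $x_1,\dots,x_k$, i.e.\ a Boolean combination of linear (in)equalities. Since $\counteq{x_i} p. \psi_i$ forces $x_i$ to equal $k_i := |\{p \in [1,n] : \psi_i \text{ holds at } p\}|$, the sentence holds of $w$ iff $\chi(k_1,\dots,k_k)$ holds. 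I would then put $\chi$ into disjunctive normal form and, using the fact that a rational linear form $z = c_0 + \sum_j c_j x_j$ takes values in $\tfrac1D\mathbb{Z}$ on integer arguments (where $D$ clears the denominators of the $c_j$), rewrite every literal \emph{and its negation} as a non-strict constraint $z \ge 0$ for a rational linear form $z$: e.g.\ $z > 0$ becomes $z - \tfrac1D \ge 0$, $z = 0$ becomes $z \ge 0 \land -z \ge 0$, $z \neq 0$ becomes a disjunction. Thus $w \models \sentence$ iff $\bigvee_l \bigwedge_m \bigl(z_{lm}(k_1,\dots,k_k) \ge 0\bigr)$. This generalizes the stateless-counter lower bound of \citet{bhattamishra+:2020}, which handles only a single constraint with $\psi_i$ of the form $\isletter{\letter}{p}$.

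Next I would construct the transformer. Take the word embedding to be one-hot, so the activation at position $p$ exposes each $\isletter{\letter}{p}$ as a coordinate and also $\indicator{p = 0}$ (as $\isletter{\cls}{p}$). Take the sinusoidal positional encoding to include frequencies $j/M$, where $M$ is the least common multiple of the moduli occurring in $\sentence$; then $(\sin 2\pi p/M, \cos 2\pi p/M)$ determines $p \bmod M$, and since the $M$ possible values form a fixed, separated, $n$-independent set of points on the unit circle, a position-wise FFN can compute each $\indicator{p \cong{m} r}$ \emph{exactly} by finite interpolation with ReLU bumps. A further FFN then computes, for each $i$, the Boolean value $b_i(p) = \indicator{\psi_i \text{ holds at } p}\cdot\indicator{p \neq 0}$ as a ReLU circuit over these indicators. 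Now one transformer layer with uniform attention (set $\wq = \wk = \mathbf{0}$, so every attention logit is $0$) and $\wv$ selecting the $b_i$ coordinates and the $\indicator{p=0}$ coordinate produces, at the $\cls$ position, the values $u_i = \tfrac1{n'}\sum_p b_i(p) = k_i/n'$ and $u_* = \tfrac1{n'}\sum_p \indicator{p=0} = 1/n'$. A final FFN computes the affine combinations $v_{lm} = \sum_j c^{(lm)}_j u_j + c^{(lm)}_0 u_* = z_{lm}(k_1,\dots,k_k)/n'$ and then $y = \max_l \min_m v_{lm}$, which is piecewise linear and so realizable by a constant-depth ReLU network (nested pairwise $\max$/$\min$). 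The output layer returns $\text{sigmoid}(\lambda y)$ for some $\lambda > 0$; since $n' > 0$ we have $y \ge 0$ iff $\bigvee_l \bigwedge_m (z_{lm}(\vec k) \ge 0)$, so the classifier accepts (output $\ge \tfrac12$) exactly when $w \models \sentence$.

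The step I expect to be the real obstacle — and the reason for the rewriting in the first paragraph — is that a linear constraint cannot be tested crisply after attention: averaging rescales an integer-valued form $z$ to $z/n'$, and no fixed ReLU FFN can realize the discontinuous map $z/n' \mapsto \indicator{z \ge 0}$ correctly for all $n$, because the possible values of $z/n'$ accumulate at $0$ from both sides. The fix has two ingredients: (i) exploit the discreteness of integer-valued forms to express every atom \emph{and its negation} as a non-strict inequality $z \ge 0$, so the only comparison ever needed is against $0$ with ``$\ge$'' semantics matching the classifier's ``$\ge \tfrac12$'' acceptance rule; and (ii) aggregate the constraints not by forming Boolean indicators but by taking $\max$ and $\min$ of the signed slacks $v_{lm}$, deferring the single thresholding to the final sigmoid. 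A secondary point needing care is exactness of the modular-predicate computation uniformly in $n$, which is precisely why rational positional-encoding frequencies (as permitted by the definitions) are essential: they make $\text{PE}$ periodic, so $\indicator{p \cong{m} r}$ depends only on a finite, $n$-independent set of inputs. Re-introducing layer normalization, and the bookkeeping of routing values through residual connections and into fresh coordinates, are routine (cf.\ \cref{sec:layer_norm}).
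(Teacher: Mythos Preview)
Your proposal is correct and follows the same three-stage plan as the paper: compute each $\psi_i$ position-wise via FFNs over one-hot word embeddings and rational-frequency sinusoidal PEs, use a single uniform-attention head to obtain the averages $k_i/n'$ and $1/n'$, then evaluate $\chi$ with ReLU $\max$/$\min$ over linear combinations of those averages. The only difference is organisational, in the third stage: the paper handles $\chi$ by structural recursion (\cref{thm:lower_lemma2}), mapping each atomic slack $c_0+\sum_i c_i x_i$ through a piecewise-linear clamp to $\pm\alpha$ before combining via $\min$/$\max$, whereas you first flatten $\chi$ to a DNF of non-strict inequalities and take $\max_l\min_m$ of the raw signed slacks, deferring the single threshold to the output sigmoid; both variants rest on exactly the integrality observation you spell out (clearing denominators so that slacks are bounded away from~$0$).
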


By \cref{thm:normal_form}, we can assume \[ \sentence \equiv \exists x_1. \dots \exists x_k. \left( \bigwedge_i \counteq{x_i}p. \psi_i[p] \land \chi[x_1, \dots, x_k] \right) \] where every $\psi_i$ is quantifier-free with one free position variable and no free count variables, and $\chi$ is quantifier-free with no free position variables.

Then the proof constructs a transformer classifier with three parts.
The first, lowest, part of the network computes the truth values of the $\psi_i[p]$ at every position $p$. The second part uses uniform self-attention to find each $x_i$, the number of positions $p$ that make $\psi_i[p]$ true. The third part computes the truth value of $\chi$ and of the whole sentence.

We first show how to do this without layer normalization; \cref{sec:layer_norm_lower} explains how to modify the construction for layer normalization.

\subsection{Computing the $\psi_i[p]$}
\label{sec:psi}

For each $\psi_i[p]$, we construct a transformer encoder that computes its truth-value, in the following sense:
{\renewcommand{\formula}{\psi}
\begin{lemma} \label{thm:lower_lemma1}
  For any formula $\formula[p]$ of \logic{} which is quantifier-free with exactly one free position variable $p$ and no free count variables, there is a transformer encoder $T$ with width $d$ such that, for all $w \in \alphabet^*$ and $p \in [1, |w|]$, $[T(w)]_{d,p} = \indicator{w \models \formula[p]}$ and $[T(w)]_{d,0} = 0$.
\end{lemma}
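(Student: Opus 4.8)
The plan is to observe first that the hypotheses on $\psi[p]$ — quantifier-free, exactly one free position variable $p$, and no free count variables — force $\psi$ to be, up to logical equivalence, a Boolean combination of atoms $\isletter{\letter}{p}$ and $\congruent{r}{m}{p}$ only: a count term in such a formula must be a rational constant (there are no count variables available to occur in it), so every count atom $t_1 = t_2$ or $t_1 < t_2$ evaluates to a constant and can be replaced by $\true$ or $\false$. In particular, ordering of positions is never invoked and the computation is position-local, so I will take every self-attention sublayer to be identically zero — a transformer layer then degenerates to $A \mapsto \ffnn(A)+A$ — and the width $d$ and the number of layers will depend only on the fixed formula $\psi$.

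First I would build the input layer. Using the word embedding, I reserve one coordinate holding $\indicator{w_p = \letter}$ for each $\letter \in \alphabet$ (setting the relevant coordinates of $\text{WE}(\letter)$ accordingly and those of $\text{WE}(\cls)$ to $0$), together with one coordinate equal to $1$ at the \cls{} position and $0$ at every real position. Using the positional encoding, for each modulus $m$ that appears in some modular atom of $\psi$ I include a sinusoidal pair with rational frequency $\xi = 1/m$, contributing coordinates $\sin(2\pi p/m)$ and $\cos(2\pi p/m)$. Then a single feed-forward sublayer converts each modular atom into a $\{0,1\}$-valued coordinate, using the fact that
\[
  \cos\!\bigl(\tfrac{2\pi(p-r)}{m}\bigr) = \cos\tfrac{2\pi r}{m}\cdot\cos\tfrac{2\pi p}{m} + \sin\tfrac{2\pi r}{m}\cdot\sin\tfrac{2\pi p}{m}
\]
is a fixed linear function of the positional-encoding coordinates that equals $1$ when $p \cong{m} r$ and is at most $\cos(2\pi/m) < 1$ otherwise (the case $m = 1$ being trivial). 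Writing $\delta = 1 - \cos(2\pi/m) > 0$, the expression $\max\!\bigl(0,\ \delta^{-1}(\cos(2\pi(p-r)/m) - (1-\delta))\bigr)$ equals $1$ exactly when $p \cong{m} r$ and $0$ otherwise, so it is computed by one $\ffnn$, with the residual connection carrying the remaining coordinates through unchanged; the atoms $\isletter{\letter}{p}$ are already available.

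Next I would put $\psi$ into disjunctive normal form over these atom-indicators and evaluate it with a constant number of further feed-forward sublayers: at a fixed position every indicator is $0$ or $1$, so a negated literal is $1-\ell$, a conjunction of $t$ literals is $\max(0,\ell_1+\dots+\ell_t-(t-1))$, and a disjunction of conjuncts $b_1,\dots,b_s$ is $\max(0,\sum_j b_j)-\max(0,\sum_j b_j-1)$, each realizable with ReLU units in one layer; composing them produces a coordinate equal to $\indicator{w \models \psi[p]}$ at every position. To enforce $[T(w)]_{d,0} = 0$, one more ReLU layer replaces this value by $\max(0,\ \text{(value)} - \text{(\cls{}-indicator)})$, which is unchanged at genuine positions $p \in [1,|w|]$ and forced to $0$ at position $0$; routing the result into coordinate $d$ completes the construction.

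The one non-mechanical step, and the place I expect difficulty, is the modular-predicate construction: checking that the rational-frequency sinusoidal positional encoding genuinely separates the $m$ residue classes with a fixed positive margin, so that a bounded-size ReLU network can threshold it exactly. Everything else is routine bookkeeping of coordinates through the residual connections, which is harmless precisely because the width and depth depend only on $\psi$ and not on $|w|$.
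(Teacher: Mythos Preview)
Your argument is correct, and the one place you flag as delicate --- the modular-predicate threshold --- is handled exactly as in the paper, via the addition formula for $\cos\bigl(2\pi(p-r)/m\bigr)$ and the gap $1-\cos(2\pi/m)>0$.

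Where you differ from the paper is in the overall decomposition. The paper proceeds by structural induction on $\psi$: it builds a separate encoder for each atomic subformula, and for $\psi_1\land\psi_2$ or $\psi_1\lor\psi_2$ it invokes a concatenation lemma (\cref{thm:concat}) to form $T_1\oplus T_2$ and then adds one layer computing $\min$ or $\max$; negation is a one-layer $x\mapsto 1-x$; the final $[T(w)]_{d,0}=0$ is obtained by concatenating an indicator $\indicator{w_p\neq\cls}$ and taking $\min$ with it. You instead flatten everything: compute all atom indicators in parallel at the bottom (one wide input layer plus one $\ffnn$ for the modular atoms), put $\psi$ in DNF, and evaluate it with multi-input ReLU threshold gates $\max(0,\sum\ell_i-(t-1))$ and $\max(0,\sum b_j)-\max(0,\sum b_j-1)$. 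Your route is more elementary --- it avoids the $\oplus$ machinery entirely and uses a fixed constant number of layers independent of the formula's nesting depth --- while the paper's inductive construction is more modular, which pays off when they later adapt the proof to accommodate layer normalization (each sub-encoder is made self-normalizing separately). Both are fine proofs of the lemma.
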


The proof, given in \cref{sec:lower_lemma1}, is by induction on subformulas.
The cases for $\formula_1 \land \formula_2$ and $\formula_1 \lor \formula_2$ invoke the inductive hypothesis for both $\formula_1$ and $\formula_2$ and combine them into a single transformer encoder using the following:
\begin{lemma} \label{thm:concat}
  If\/ $T_1$ and $T_2$ are transformer encoders,
  then there is a transformer encoder, called $T_1 \oplus T_2$, such that
  \[ (T_1 \oplus T_2)(w) = \begin{bmatrix} T_1(w) \\ T_2(w) \end{bmatrix}. \]
\end{lemma}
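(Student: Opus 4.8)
The plan is to build $T_1 \oplus T_2$ as a single wider transformer that runs $T_1$ and $T_2$ in parallel on two disjoint blocks of coordinates — a top block of width $d_1$ carrying $T_1$'s computation and a bottom block of width $d_2$ carrying $T_2$'s — and never lets the two blocks interact. First I would reduce to the case where $T_1$ and $T_2$ have the same number of layers $L$: a transformer layer all of whose weight matrices and biases are $\mathbf{0}$ acts as the identity, since then $\selfatt^{(h)}(A) = \mathbf{0}$ for its (single, zero-weight) head and $\ffnn(A') = \mathbf{0}$, so by the residual connections $A'' = A' = A$; pad whichever of $T_1, T_2$ has fewer layers with such layers.

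The combined input layer just stacks the two input layers: $\text{WE}(\letter) = \begin{bmatrix}\text{WE}_1(\letter)\\ \text{WE}_2(\letter)\end{bmatrix}$ is again a word embedding, and stacking the two sinusoidal positional encodings (concatenating their lists of rational frequencies) is again a sinusoidal positional encoding, of even width $d_1 + d_2$. For each $\ell$, the merged layer has $H_1 + H_2$ heads: the first $H_1$ are the heads of layer $\ell$ of $T_1$, with $\wq, \wk$ extended by zero columns on the bottom block (so the logits $s_{qp}$, and hence the attention weights, depend only on the top block) and $\wv$ extended by zero rows and columns so that it reads from and writes to the top block only; symmetrically for the bottom block with $T_2$'s heads. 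Because each head is normalized by its own softmax over positions, and that softmax now sees exactly the logits it would have seen in $T_1$ (resp.\ $T_2$), summing the $H_1 + H_2$ merged heads produces $\sum_h \selfatt_1^{(h)}$ on the top block and $\sum_h \selfatt_2^{(h)}$ on the bottom block. The $\sqrt d$ normalizing constant in \cref{eq:logits} changes from $\sqrt{d_i}$ to $\sqrt{d_1+d_2}$, but this is corrected by rescaling each $\wq$ by $\sqrt{(d_1+d_2)/d_i}$. The merged FFN uses block-diagonal $\wone, \wtwo$ and stacked biases $\bone, \btwo$; since the ReLU is elementwise it respects the block split, so $\ffnn$ acts as $\ffnn_1$ on the top block and $\ffnn_2$ on the bottom. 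The residual connections add $A$, which preserves the block structure.

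Finally, a routine induction on $\ell$ shows that the activation matrix after $\ell$ layers of $T_1 \oplus T_2$ is $\begin{bmatrix}A_1^{(\ell)}\\ A_2^{(\ell)}\end{bmatrix}$, where $A_i^{(\ell)}$ is the activation of $T_i$; in particular the encoder output is $\begin{bmatrix}T_1(w)\\ T_2(w)\end{bmatrix}$, as required. The only point needing real care — everything else being linear algebra and bookkeeping — is the merging of the self-attention heads: because attention is not linear (the softmax normalizes within each head), it is essential that the padded $\wq, \wk$ zero out the other block so that each head's normalization is taken over exactly the logits it had in the original network; with block-diagonal value maps and the residual connections, the two computations then stay perfectly decoupled.
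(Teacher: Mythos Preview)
Your proposal is correct and follows essentially the same construction as the paper: pad the shallower encoder with identity layers (zero-weight sublayers plus residual connections), stack the word embeddings and positional encodings, and merge each layer into one with $H_1+H_2$ heads using block-padded $\wq,\wk$ and block-diagonal $\wv,\wone,\wtwo$. You are in fact more careful than the paper on one point: you observe that the $\sqrt{d}$ in \cref{eq:logits} changes under concatenation and fix it by rescaling $\wq$, whereas the paper sidesteps the issue by noting that in its applications all concatenated self-attentions are trivial anyway.
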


\begin{proof}
  See \cref{sec:concat}.
\end{proof}

Then most cases add one hidden layer on top, whose self-attention does nothing ($\wv = \mathbf{0}$) and whose FFN computes the relevant function.

Thus, for each $\psi_i[p]$, we get an equivalent transformer encoder, which we call~$\Psi_i$.
} 

\begin{toappendix}
\subsection{Proof of \cref{thm:concat}}
\label{sec:concat}

  If one encoder is less deep than the other, add layers to it that compute the identity function:
  For the self-attention, just set $\wv = \mathbf{0}$, and for the FFN, set $\wone = \mathbf{0}$ and $\bone = \mathbf{0}$. In both cases, the sublayer computes the identity function thanks to the residual connections.
  
  Concatenate the word and position vectors:
  \begin{align*}
    \text{WE}(a) &= \begin{bmatrix} \text{WE}_1(a) \\ \text{WE}_2(a) \end{bmatrix} &
    \text{PE}(p) &= \begin{bmatrix} \text{PE}_1(p) \\ \text{PE}_2(p) \end{bmatrix}.
  \end{align*}
  Although we only ever concatenate layers whose self-attentions compute the identity function, we show how to concatenate self-attentions for completeness.
  For each pair of multi-head self-attentions, $\selfatt_1^{(1)}, \ldots, \selfatt_1^{(H_1)}$ and $\selfatt_2^{(1)}, \ldots, \selfatt_2^{(H_2)}$, create a multi-head self-attention with $H_1+H_2$ heads:
  \begin{align*}
    W^{(h,\text{Q})} &= \begin{bmatrix} W_1^{(h,\text{Q})} & \mathbf{0} \end{bmatrix} \quad (1 \leq h \leq H_1) & W^{(h,\text{Q})} &= \begin{bmatrix} \mathbf{0} & W_2^{(h-H_1,\text{Q})} \end{bmatrix} \quad (H_1+1 \leq h \leq H_1+H_2) \\
    W^{(h,\text{K})} &= \begin{bmatrix} W_1^{(h,\text{K})} & \mathbf{0} \end{bmatrix} & W^{(h,\text{K})} &= \begin{bmatrix} \mathbf{0} & W_2^{(h-H_1,\text{K})} \end{bmatrix} \\
    W^{(h,\text{V})} &= \begin{bmatrix} W_1^{(h,\text{V})} & \mathbf{0} \\ \mathbf{0} & \mathbf{0} \end{bmatrix} & W^{(h,\text{V})} &= \begin{bmatrix} \mathbf{0} & \mathbf{0} \\ \mathbf{0} & W_2^{(h-H_1,\text{V})} \end{bmatrix}
  \end{align*}

  For each $\text{FFN}_1$ and $\text{FFN}_2$, create a FFN:
  \begin{align*}
    \wone &= \begin{bmatrix} W_1^{(1)} & \mathbf{0} \\ \mathbf{0} & W_2^{(1)} \end{bmatrix} &
    \bone &= \begin{bmatrix} b_1^{(1)} \\ b_2^{(1)} \end{bmatrix} \\
    \wtwo &= \begin{bmatrix} W_1^{(2)} & \mathbf{0} \\ \mathbf{0} & W_2^{(2)} \end{bmatrix} &
    \btwo &= \begin{bmatrix} b_1^{(2)} \\ b_2^{(2)} \end{bmatrix}.
  \end{align*}
\end{toappendix}

\begin{toappendix}
{\renewcommand{\formula}{\psi}
\subsection{Proof of \cref{thm:lower_lemma1}}
\label{sec:lower_lemma1}

The following lemma implies that we can always modify a FFN to cancel out its residual connection:
\begin{lemma} \label{thm:cancel_res}
  If $f \colon \R^d \rightarrow \R^d$ is a FFN, there is a FFN $f_-$ such that $f_-(x) = f(x) - x$.
\end{lemma}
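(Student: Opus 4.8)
The plan is to use the coordinate-wise identity $t = \max(0,t) - \max(0,-t)$ to reconstruct the linear map $x \mapsto -x$ out of extra ReLU units, since the only place a FFN can produce non-constant behaviour is through its ReLU.

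Write $f(x) = \wtwo\bigl(\max(0,\wone x + \bone)\bigr) + \btwo$ with hidden width $\dffnn$. I would take $f_-$ to be a FFN with hidden width $\dffnn + 2d$ and parameters
\begin{align*}
  \wone_- &= \begin{bmatrix} \wone \\ \mathbf{I} \\ -\mathbf{I} \end{bmatrix}, &
  \bone_- &= \begin{bmatrix} \bone \\ \mathbf{0}^d \\ \mathbf{0}^d \end{bmatrix}, &
  \wtwo_- &= \begin{bmatrix} \wtwo & -\mathbf{I} & \mathbf{I} \end{bmatrix}, &
  \btwo_- &= \btwo.
\end{align*}
The $2d$ new hidden units then compute $\max(0,x)$ and $\max(0,-x)$ coordinate-wise, and the appended blocks $-\mathbf{I}$ and $\mathbf{I}$ of $\wtwo_-$ make these units contribute $-\max(0,x)+\max(0,-x) = -x$ to the output; since the original units still contribute $f(x)$, we get $f_-(x) = f(x) - x$.

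The one fact to verify is the scalar identity $\max(0,t)-\max(0,-t)=t$, which is immediate by splitting into the cases $t\ge 0$ and $t<0$ and applying it coordinate-wise. I do not expect any genuine obstacle here: the lemma reduces to writing down this block-matrix construction and checking that the dimensions line up.
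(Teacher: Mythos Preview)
Your construction is correct and is literally identical to the paper's own proof: the paper gives exactly the same block matrices $W_-^{(1)}, b_-^{(1)}, W_-^{(2)}, b_-^{(2)}$, relying on the same $\max(0,t)-\max(0,-t)=t$ trick (though it does not spell the identity out).
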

\begin{proof}
  \begin{align*}
    W_-^{(1)} &= \begin{bmatrix} \wone \\ \mathbf{I} \\ -\mathbf{I} \end{bmatrix} & b_-^{(1)} &= \begin{bmatrix} \bone \\ \mathbf{0} \\ \mathbf{0} \end{bmatrix} \\
    W_-^{(2)} &= \begin{bmatrix} \wtwo & -\mathbf{I} & \mathbf{I} \end{bmatrix} & b_-^{(2)} &= \btwo. \tag*{\qedhere}
  \end{align*}
\end{proof}

To prove \cref{thm:lower_lemma1}, we first prove the following slightly modified claim, by induction on subformulas:
\begin{lemma}
  For any formula $\formula[p]$ of \logic{} which is quantifier-free with exactly one free position variable $p$ and no free count variables, there is a transformer encoder $T$ with width $d$ such that, for all $w \in \alphabet^*$ and $p \in [1, |w|]$, $[T(w)]_{d,p} = \indicator{w \models \formula[p]}$.
\end{lemma}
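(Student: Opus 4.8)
The plan is to induct on the structure of $\formula$. Since $\formula$ is quantifier-free and has no count variables at all (it has no free count variables and binds none), its atomic subformulas lie among $\true$, $\false$, $\isletter{\letter}{p}$, $\congruent{r}{m}{p}$, and comparisons between constant count terms; the last reduce to $\true$ or $\false$, so there are effectively four atomic cases, and every position variable occurring in any subformula is the free variable $p$. In each case I build an encoder of some width $d'$ whose coordinate $d'$ holds, at every position $p \in [1,|w|]$, the required indicator. \textbf{Atomic cases.} For $\true$ (resp.\ $\false$) take width $1$, the word embedding sending every symbol to $[1]$ (resp.\ $[0]$), zero positional encoding, and no hidden layers. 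For $\isletter{\letter}{p}$, set $[\text{WE}(b)]_1 = \indicator{b = \letter}$ with zero positional encoding, so the input vector already equals $\indicator{w_p = \letter}$ at position~$p$. For $\congruent{r}{m}{p}$, use a sinusoidal positional encoding with the single \emph{rational} frequency $\xi_1 = 1/m$, so $[\text{PE}(p)]_{1,2} = (\sin 2\pi p/m,\ \cos 2\pi p/m)$ depends only on $p \bmod m$ and takes $m$ distinct values $v_0,\dots,v_{m-1}$ on the unit circle; with $\beta = \max_{j \neq r}\langle v_j, v_r\rangle < 1$ and $\alpha = 1/(1-\beta) > 0$, the affine map $\ell(v) = \alpha(\langle v, v_r\rangle - \beta)$ has $\ell(v_r) = 1$ and $\ell(v_j) \leq 0$ for $j \neq r$, so a single FFN layer implementing $\ell$ followed by a ReLU outputs $\max(0,\ell(\text{PE}(p))) = \indicator{p \cong{m} r}$ exactly in coordinate~$1$ (its value at $p=0$ being irrelevant).

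\textbf{Inductive cases.} For $\neg\formula_1$, apply the IH to obtain $T_1$ of width $d_1$ with $[T_1(w)]_{d_1,p} = \indicator{w \models \formula_1[p]}$; append one layer with inert self-attention ($\wv = \mathbf 0$) whose FFN — adjusted by \cref{thm:cancel_res} to cancel the layer's residual connection — maps coordinate $d_1$ to $1 - x_{d_1}$, an affine and hence ReLU-realizable map where $x_{d_1} \in \{0,1\}$. For $\formula_1 \land \formula_2$ and $\formula_1 \lor \formula_2$, apply the IH to each, form $T_1 \oplus T_2$ by \cref{thm:concat} (so the two answers sit in coordinates $d_1$ and $d_1+d_2$), and append a layer with inert self-attention whose residual-cancelled FFN writes $\max(0,\ x_{d_1} + x_{d_1+d_2} - 1)$ (conjunction) resp.\ $x_{d_1} + x_{d_1+d_2} - \max(0,\ x_{d_1}+x_{d_1+d_2}-1)$ (disjunction) into coordinate $d_1+d_2$; on Boolean inputs these are the correct truth values. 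This produces an encoder $T$ for $\formula[p]$. To then recover \cref{thm:lower_lemma1} in full, add one final layer that detects position $0$ (the unique occurrence of \cls, flagged by a dedicated word-embedding coordinate) and forces coordinate $d$ to $0$ there.

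\textbf{Main obstacle.} Everything apart from the $\congruent{r}{m}{p}$ case is bookkeeping: allocating coordinate indices, routing the answer coordinates to the top before a combining layer, and checking that the affine/ReLU maps are genuine FFNs (routinely true, with \cref{thm:cancel_res} handling residuals). The one delicate point is that a plain ReLU network can produce an \emph{exact} $0/1$ indicator from the positional encoding. This relies on (i) being allowed to choose the \emph{rational} frequency $1/m$, so $\text{PE}$ has period exactly $m$ and its $m$ values are distinct, and (ii) the elementary fact that one can affinely threshold finitely many distinct points on the unit circle to single out any one of them; the resulting $\ell$ attains the value $1$ on the nose at $v_r$, so a single ReLU with no upper clipping already yields the indicator. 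With the original irrational frequencies of \citet{vaswani+:2017} the encoding would not be periodic in $p$ and this argument would break, which is precisely why the definition in \cref{sec:transformers} permits arbitrary rational $\xi_i$.
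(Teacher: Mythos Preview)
Your proposal is correct and follows essentially the same inductive approach as the paper, including an identical construction for $\congruent{r}{m}{p}$ (the paper's threshold is exactly your $\beta = \cos(2\pi/m)$), with only cosmetic differences in the Boolean FFNs (you use $\max(0,x+y-1)$ where the paper uses $\min$/$\max$). One small caveat: the paper's sinusoidal PE has even width and its cosine component is never identically zero, so ``width~1'' and ``zero positional encoding'' are not literally available under the definitions in \cref{sec:transformers}; the paper's fix is to work in width~2 with $\xi_1=0$ (so $\text{PE}(p)=[0,1]^\top$) and spend one extra layer routing the indicator into the last coordinate.
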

Then the final step will be to set $[T(w)]_{d,0} = 0$.

In the following cases, we give diagrams of FFNs rather than writing out their weight matrices. In these diagrams, a node~\begin{tikzpicture}[baseline=-3pt] \node[input]; \end{tikzpicture}
with a value underneath it stands for the component of the input vector that contains that value. A node~\begin{tikzpicture}[baseline=-3pt] \node[relu]; \end{tikzpicture} is a ReLU unit, and a node~\begin{tikzpicture}[baseline=-3pt] \node[linear]; \end{tikzpicture} is a linear unit. Edges are connections, with their connection weights written next to them. If a unit has nonzero bias, the bias is written next to it. When the residual connection is not cancelled out using~\cref{thm:cancel_res}, we draw the residual connection as an edge with weight~1.

  \paragraph{Case $\formula = \isletter{\letter}{p}$:} Construct a transformer encoder with just an input layer ($L=0$):
  \begin{align*}
    \text{WE}(\letter') &= \begin{bmatrix} \indicator{\letter'=\letter} \\ 0 \end{bmatrix} \\
    \text{PE}(p) &= \begin{bmatrix} \sin 0 \\ \cos 0 \end{bmatrix} = \begin{bmatrix} 0 \\ 1 \end{bmatrix}.
  \end{align*}
  Then add one hidden layer, whose self-attention does nothing ($\wv = \mathbf{0}$) and whose FFN is
  \begin{center}
    \begin{tikzpicture}
      \node (i) at (0,0) [input,label=below:{$\letter'=\letter$}];
      \node at (1,0) [input,label=below:$1$];
      \node (h) at (1,1) [relu] edge node {$1$} (i);
      \node (o) at (1,2) [linear] edge node {$1$} (h);
    \end{tikzpicture}
  \end{center}
  Apply \cref{thm:cancel_res} to cancel out the residual connection. 
  \paragraph{Case $\formula = \congruent{r}{m}{p}$:}
  Any function on integer positions with period $m$ can be expressed as a linear function of a sinusoidal PE with width $2m$ using a discrete Fourier series. But here we exploit ReLUs to give a more compact network.
  Let
  \begin{align*}
    \text{WE}(\letter) &= \begin{bmatrix} 0 \\ 0 \end{bmatrix} \\
    \text{PE}(p) &= \begin{bmatrix} \sin \frac{2\pi p}{m} \\ \cos \frac{2\pi p}{m}  \end{bmatrix}.
  \end{align*}
  Then add one hidden layer, whose self-attention does nothing, and whose FFN is
  \begin{center}
    \begin{tikzpicture}[x=1.5cm]
      \node (i1) at (0,0) [input,label=below:$\sin\frac{2\pi p}{m}$];
      \node (i2) at (1,0) [input,label=below:$\cos\frac{2\pi p}{m}$];
      \node (h) at (1,1) [relu,bias={$-\cos\frac{2\pi}{m}$}] edge node {$\sin \frac{2\pi r}{m}$} (i1) edge node[auto=left] {$\cos \frac{2\pi r}{m}$} (i2);
      \node (o) at (1,2) [linear] edge node {$\frac{1}{1-\cos\frac{2\pi}{m}}$} (h);
    \end{tikzpicture}
  \end{center}
  Apply \cref{thm:cancel_res} to cancel out the residual connection.
  So (using the identity $\cos (x-y) = \cos x \cos y + \sin x \sin y$) the output vectors are
  \begin{align*}
    [T(w)]_{*,p} = \begin{bmatrix}
      0 \\
      \frac{ \max \left\{0, \cos \frac{2\pi(p-r)}{m} - \cos \frac{2\pi}{m} \right\}}{1-\cos\frac{2\pi}{m}} 
    \end{bmatrix} \\
    \intertext{and for $p$ an integer, this simplifies to}
    [T(w)]_{*,p} = \begin{bmatrix} 0 \\ \indicator{p \cong{m} r} \end{bmatrix}.
  \end{align*}

  For example, the graph below shows the case $r=1$, $m=5$:
  \begin{center}
    \def\r{1} \def\m{5}
    \tikzset{every axis/.append style={xmin=1,xmax=10,ymin=-1.5,ymax=1.5,width=1.75in,height=1.25in}}
    \begin{tikzpicture}[baseline=0]
      \begin{axis}[xlabel={$p$},ylabel={$[T(w)]_{2,p}$}]
        \plot[mark=none,domain={0:10},samples=100] { (max(0,cos(deg(2*pi*(x-\r)/\m))-cos(deg(2*pi/\m)))) / (1-cos(deg(2*pi/\m))) };
        \plot[mark=*,draw=none,domain={0:10},samples=11] { (max(0,cos(deg(2*pi*(x-\r)/\m))-cos(deg(2*pi/\m)))) / (1-cos(deg(2*pi/\m))) };
      \end{axis}
    \end{tikzpicture}
  \end{center}

  \paragraph{Case $\formula = \neg \formula_1$:} By the induction hypothesis, let $T_1$ be a transformer encoder that computes $\formula_1$.
  Add one new layer. The self-attention does nothing ($\wv = \mathbf{0}$), and the FFN performs the negation:
  \begin{center}
    \begin{tikzpicture}
      \node (x) at (0,0) [input,label=below:$\formula_1$];
      \node (h) at (0,1) [relu] edge node {$1$} (x);
      \node (o) at (0,2) [linear,bias={$1$}] edge node {$-1$} (h);
    \end{tikzpicture}
  \end{center}
  Then apply \cref{thm:cancel_res} to cancel out the residual connection. This computes the negation in the last dimension.
  
  \paragraph{Case $\formula = \formula_1 \land \formula_2$:} By the induction hypothesis, let $T_1$ and $T_2$ be transformer encoders that compute $\formula_1$ and $\formula_2$ (respectively).
  By Lemma~\ref{thm:concat}, we construct $T_1 \oplus T_2$.
  Then we add one new layer.
  The self-attention does nothing ($\wv = \mathbf{0}$), and the FFN computes the minimum of its two inputs. In this case, we do not use \cref{thm:cancel_res}, as we make use of the residual connection.
  \begin{center}
    \begin{tikzpicture}
      \node(x) at (0,0) [input,label=below:$\formula_1$];
      \node(y) at (1,0) [input,label=below:$\formula_2$];
      \node(h) at (0,1) [relu] edge node[auto=right]{$-1$} (x) edge node[auto=left,near start]{$1$} (y);
      \node(out) at (1,2) [linear] edge node[auto=right]{$-1$} (h) edge node[auto=left] {$1$} (y);
    \end{tikzpicture}
  \end{center}

  \paragraph{Case $\formula = \formula_1 \lor \formula_2$:} The FFN computes the maximum of its two inputs:
  \begin{center}
    \begin{tikzpicture}
      \node(x) at (0,0) [input,label=below:$\formula_1$];
      \node(y) at (1,0) [input,label=below:$\formula_2$];
      \node(h) at (0,1) [relu] edge node[auto=right]{$1$} (x) edge node[auto=left,near start]{$-1$} (y);
      \node(out) at (1,2) [linear] edge node[auto=right]{$1$} (h) edge node[auto=left] {$1$} (y);
    \end{tikzpicture}
  \end{center}

  Finally, to set $[T(w)]_{d,0} = 0$, construct a transformer encoder $N$ such that, for all $p$, $[N(w)]_{*,p} = \begin{bmatrix} \indicator{w_p \neq \cls} \end{bmatrix}$,
and use \cref{thm:concat} to form $T \oplus N$. Add a new layer whose self-attention does nothing, and whose FFN computes the minimum, just as in the case for $\land$ above.
} 
\end{toappendix}

\subsection{Counting quantifiers}

The second step is to find the value of each $x_i$, which is the number of positions $p$ for which $w \models \psi_i[p]$.
Construct a transformer encoder $U$ such that, for all $p$,
\begin{equation*}
  [U(w)]_{*,p} = \begin{bmatrix}
    \indicator{w_p = \cls} \\
    \mathbf{0}^{k+1}
  \end{bmatrix}.
\end{equation*}
Use \cref{thm:concat} to form $\Psi = {\bigoplus_{i=1}^k \Psi_i} \oplus U$, and add one more layer.
In the self-attention, $\wv$ projects and permutes dimensions so that the value vectors are
\begin{equation*}
  \wv\,[\Psi(w)]_{*,p} = 
  \begin{bmatrix}
    \mathbf{0} \\
    \indicator{w\models\psi_1[p]} \\
    \vdots \\
    \indicator{w\models\psi_k[p]} \\
    \indicator{w_p = \cls}
  \end{bmatrix}.
\end{equation*}
The self-attention uses uniform attention to average over~$p$ ($\wq = \wk = \mathbf{0}$), and the FFN does nothing ($\wone = \bone = \wtwo = \btwo = \mathbf{0}$). Call this encoder $C$. Its output, for all $p$, is
\begin{equation*}
  [C(w)]_{*,p} = \frac{1}{n'} \begin{bmatrix}
    \mathbf{0} \\
    x_1 \\
    \vdots \\
    x_k \\
    1
  \end{bmatrix}.
\end{equation*}

\subsection{Computing $\chi$}
\label{sec:chi}

The third step is to compute the truth value of $\chi$ given the values of the $x_i$.
After the division by $n'$, we can no longer use 1 for true and 0 for false; instead, we use positive numbers for true and negative numbers for false.

\begin{lemma} \label{thm:lower_lemma2}
  If $\chi$ is a quantifier-free formula of \logic{} with free count variables $x_1, \dots, x_k$ and no position variables,
  then there is a transformer stack $\Chi$ with width $d$ such that for any $A \in \R^{d \times n'}$ with
  \[ A_{*,0} = \alpha \begin{bmatrix} \mathbf{0} \\ x_1 \\ \vdots \\ x_k \\ 1 \end{bmatrix} \qquad x_1, \ldots, x_k \in \mathbb{Z}\]
  we have $[\Chi(A)]_{d,0} > 0$ if $\chi[x_1, \dots, x_k]$ is true, and $[\Chi(A)]_{d,0} < 0$ if $\chi[x_1, \dots, x_k]$ is false.
\end{lemma}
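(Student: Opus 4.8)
The plan is to prove this by induction on the structure of $\chi$. Since $\chi$ is quantifier-free and mentions no position variables, it is a Boolean combination of the atoms $\true$, $\false$, and linear (in)equalities $t_1 = t_2$, $t_1 < t_2$ over count terms $c_0 + c_1 x_1 + \dots + c_k x_k$ with rational coefficients. In the stack I will build, self-attention never does any work — set $\wv = \mathbf{0}$ (equivalently $\wq = \wk = \mathbf{0}$) in every layer — so each layer reduces to a position-wise FFN, and since the conclusion only constrains position $0$ I may ignore every other column. The convention I will maintain is that the last coordinate at position $0$ holds a real number whose \emph{sign} is positive exactly when the subformula under consideration is true and negative exactly when it is false; to keep ``positive vs.\ negative'' unambiguous I will also maintain a uniform \emph{margin}: that value always has absolute value at least $\mu := \alpha/(2D)$, where $\alpha > 0$ is the given scale factor (in the intended use $\alpha = 1/n'$) and $D$ is a fixed positive integer clearing the denominators of all rational constants occurring in $\chi$. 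The role of $D$ is that, since $x_1,\dots,x_k$ are integers, every affine form built from $\chi$'s terms takes values in $\tfrac1D\mathbb{Z}$, so a strictly true inequality has slack at least $\tfrac1D$.

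For the base cases, an atom $t_1 < t_2$ is the assertion $\ell(x) > 0$, where $\ell(x) := t_2 - t_1$ is an affine form whose coefficients have denominator dividing $D$; since $A_{*,0} = \alpha\,[\mathbf{0};\,x_1;\dots;x_k;\,1]$, the quantity $\alpha\bigl(\ell(x) - \tfrac1{2D}\bigr)$ is an affine function of the entries of $A_{*,0}$, and any affine map $\R^d \to \R^d$ is computed by one FFN together with its residual connection (write each affine output as $\max(0,u) - \max(0,-u)$, or use \cref{thm:cancel_res} to cancel the residual). I would use such a layer to place $\alpha\bigl(\ell(x) - \tfrac1{2D}\bigr)$ in a scratch coordinate; because $\alpha > 0$ and by the slack bound, this is $\ge \mu$ when $\ell(x) > 0$ and $\le -\mu$ otherwise, as needed. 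The atom $t_1 = t_2$ I would reduce to $\neg(t_1 < t_2) \land \neg(t_2 < t_1)$ and handle via the cases below; $\true$ and $\false$ simply output $+\alpha$ and $-\alpha$. Throughout, each sub-stack writes only to its own private scratch coordinates and leaves the coordinates carrying $\alpha x_1,\dots,\alpha x_k$ and the constant $\alpha$ untouched, so they stay available to sibling sub-stacks.

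For the inductive step: for $\neg\chi_1$, I would append to $\chi_1$'s stack one layer whose FFN negates the relevant coordinate (again a linear map realizable with the residual connection), flipping the sign and preserving the margin. For $\chi_1 \land \chi_2$, I would build the stacks for $\chi_1$ and $\chi_2$ and run them in parallel on disjoint blocks of scratch coordinates, both reading the shared coordinates holding $\alpha, \alpha x_1, \dots, \alpha x_k$ — the block-diagonal construction of \cref{thm:concat} adapted to stacks rather than encoders, padding the shorter stack with identity layers — so the two truth values land in distinct coordinates; then one more layer whose FFN takes their minimum, exactly as in the conjunction case of \cref{thm:lower_lemma1} (which uses the residual connection). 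Since $\min(y_1,y_2)$ has absolute value $\ge \mu$ whenever $y_1,y_2$ do and is positive iff both are, the invariant survives; $\chi_1 \lor \chi_2$ is identical with maximum. A final layer moves the root's value into coordinate $d$, yielding $[\Chi(A)]_{d,0}$ with the stated sign.

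The main obstacle I expect is keeping the running value strictly bounded away from $0$: otherwise the sign convention is meaningless, with an ambiguous third case. This is precisely what the denominator bound $D$ and the uniform margin $\mu = \alpha/(2D)$ are for, and one has to check (as sketched) that negation, minimum, and maximum never erode it. Everything else — expressing affine maps by ReLU FFNs, and composing stacks side by side while protecting the shared input coordinates — is routine, the latter being a transparent stack-level analogue of \cref{thm:concat}.
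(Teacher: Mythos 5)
Your proof is correct and follows essentially the same route as the paper's: induction on the structure of $\chi$, truth values encoded in the sign of an activation at position $0$, atoms handled by a single position-wise FFN layer (attention disabled) reading the coordinates holding $\alpha x_1,\dots,\alpha x_k,\alpha$, and $\land/\lor$ handled by running the two sub-stacks in parallel via a \cref{thm:concat}-style block construction followed by a min/max layer as in \cref{thm:lower_lemma1}. The only cosmetic differences are in the atomic cases --- the paper uses saturating piecewise-linear ramps outputting exactly $\pm 1$ (and a tent function for equality), whereas you emit a shifted affine value with an explicit margin $\alpha/(2D)$ and reduce equality to two strict inequalities --- and your explicit margin bookkeeping makes the ``bounded away from zero'' issue for rational coefficients more transparent than the paper's treatment.
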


\begin{proof}
  See \cref{sec:lower_lemma2}.
\end{proof}

\begin{toappendix}
\subsection{Proof of \cref{thm:lower_lemma2}}
\label{sec:lower_lemma2}

By induction on the structure of $\chi$. For simplicity, we set $\alpha=1$; since all the FFNs do not have bias, the construction will work for any $\alpha>0$ as well.

\paragraph{Case $c_0 + \sum_i c_i x_i > 0$:}
  Add a new layer whose self-attention does nothing and whose FFN (after applying \cref{thm:cancel_res}) computes the piecewise linear function shown at right.
  \begin{center}
    \begin{tabular}{cc}
      \begin{tikzpicture}[x=2cm,y=1.5cm]
        \node(x1) at (0,0) [input,label=below:$x_1$];
        \node at (0.5,0) {$\cdots$};
        \node(xk) at (1,0) [input,label=below:$x_k$];
        \node(one) at (2,0) [input,label=below:$1$];
        \node(h1) at (0,1) [relu] edge node[near start] {$c_1$} (x1) edge node[near start] {$c_k$} (xk) edge node[auto=left,very near start] {$c_0$} (one);
        \node(h2) at (1,1) [relu] edge node[very near start] {$c_1$} (x1) edge node[very near start] {$c_k$} (xk) edge node[near start,auto=left] {$-1$} (one);
        \node(h3) at (2,1) [relu] edge node[near start,auto=left] {$1$} (one);
        \node(out) at (2,2) [linear] edge node {$2$} (h1) edge node[auto=left] {$-2$} (h2) edge node[auto=left] {$-1$} (h3);
      \end{tikzpicture}
      &
      \begin{tikzpicture}
        \begin{axis}[xmin=-2,xmax=2,ymin=-1.2,ymax=1.2,height=5cm,axis equal image,xlabel={$c_0 + \sum_i c_i x_i$},xtick={-2,-1,0,1,2},xticklabels={$-2$,$-1$,$0$,$1$,$2$},ytick={-1,0,1},yticklabels={$-1$,$0$,$1$}]
          \addplot[mark=none] coordinates { (-2,-1) (0,-1) (1,1) (2,1) };
        \end{axis}
      \end{tikzpicture}
    \end{tabular}
  \end{center}
  
  \paragraph{Case $c_0 + \sum_i c_i x_i = 0$:} Same as above, but with the following FFN.
  \begin{center}
    \begin{tabular}{cc}
      \begin{tikzpicture}[x=3cm,y=1.5cm]
        \node(x1) at (0,0) [input,label=below:$x_1$];
        \node at (0.5,0) {$\cdots$};
        \node(xk) at (1,0) [input,label=below:$x_k$];
        \node(one) at (2,0) [input,label=below:$1$];
        \node(h1) at (0,1) [relu] edge node[near start] {$c_1$} (x1) edge node[very near start] {$c_k$} (xk) edge node[auto=left,pos=0.05] {$c_0+1$} (one);
        \node(h2) at (0.66,1) [relu] edge node[pos=0.1] {$c_1$} (x1) edge node[auto=left,pos=0.15] {$c_k$} (xk) (one);
        \node(h3) at (1.33,1) [relu] edge node[pos=0.1] {$c_1$} (x1) edge node[auto=left,very near start] {$c_k$} (xk) edge node[very near start,auto=left] {$-1$} (one);
        \node(h4) at (2,1) [relu] edge node[auto=left,near start] {$1$} (one);
        \node(out) at (2,2) [linear] edge node {$2$} (h1) edge node[auto=left] {$-4$} (h2) edge node[auto=left] {$2$} (h3) edge node[auto=left] {$-1$} (h4);
      \end{tikzpicture}
      &
      \begin{tikzpicture}
        \begin{axis}[xmin=-2,xmax=2,ymin=-1.2,ymax=1.2,height=5cm,axis equal image,xlabel={$c_0 + \sum_i c_i x_i$},xtick={-2,-1,0,1,2},xticklabels={$-2$,$-1$,$0$,$1$,$2$},ytick={-1,0,1},yticklabels={$-1$,$0$,$1$}]
          \addplot[mark=none] coordinates { (-2,-1) (-1,-1) (0,1) (1,-1) (2,-1) };
        \end{axis}
      \end{tikzpicture}
    \end{tabular}
  \end{center}

  \paragraph{Case $\neg \chi_1$:} Since we are using a different representation of truth values, this is different from \cref{thm:lower_lemma1}.
  \begin{center}
    \begin{tikzpicture}
      \node(x) at (0.5,0) [input,label=below:$\chi_1$];
      \node(h1) at (0,1) [relu] edge node{$1$} (x);
      \node(h2) at (1,1) [relu] edge node[auto=left]{$-1$} (x);
      \node(out) at (0.5,2) [linear] edge node{$-1$} (h1) edge node[auto=left]{$1$} (h2);
    \end{tikzpicture}
  \end{center}
  
  \paragraph{Case $\chi_1 \land \chi_2$:}
  By the induction hypothesis, there are stacks of transformer layers of widths $d_1$ and $d_2$, respectively, that compute $\chi_1$ and $\chi_2$.
  Concatenate them by \cref{thm:concat}.
  Add a layer at the \emph{bottom} whose self-attention does nothing and whose FFN copies the input from the first half to the second half:

  \begin{center}
    \begin{tikzpicture}[y=1.5cm]
      \node(x11) at (0,0) [input,label=below:\strut$x_1$];
      \node at (1,0) {$\cdots$};
      \node(x1k) at (2,0) [input,label=below:\strut$x_k$];
      \node(u1) at (3,0) [input,label=below:\strut$1$];
      \node at (4,0) {$\cdots$};
      \node(x21) at (5,0) [input,label=below:\strut$0$];
      \node at (6,0) {$\cdots$};
      \node(x2k) at (7,0) [input,label=below:\strut$0$];
      \node(u2) at (8,0) [input,label=below:\strut$0$];
      \node at (9,0) {$\cdots$};
      \begin{scope}[every path/.style={thick,decorate,decoration={calligraphic brace,mirror}}]
      \draw (-0.2,-0.5) to node[auto=right] {$d_1$ units} (4.2,-0.5);
      \draw (4.8,-0.5) to node[auto=right] {$d_2$ units} (9.2,-0.5);
      \end{scope}
      
      \node(h11) at (0,1) [relu] edge (x11);
      \node at (1,1) {$\cdots$};
      \node(h1k) at (2,1) [relu] edge (x1k);
      \node(h1u) at (3,1) [relu] edge (u1);
      \node at (4,1) {$\cdots$};
      \node(h21) at (5,1) [relu] edge (x11);
      \node at (6,1) {$\cdots$};
      \node(h2k) at (7,1) [relu] edge (x1k);
      \node(h2u) at (8,1) [relu] edge (u1);
      \node at (9,1) {$\cdots$};
      
      \node(o11) at (0,2) [linear] edge (h11);
      \node at (1,2) {$\cdots$};
      \node(o1k) at (2,2) [linear] edge (h1k);
      \node(o1u) at (3,2) [linear] edge (h1u);
      \node at (4,2) {$\cdots$};
      \node(o21) at (5,2) [linear] edge (h21);
      \node at (6,2) {$\cdots$};
      \node(o2k) at (7,2) [linear] edge (h2k);
      \node(o2u) at (8,2) [linear] edge (h2u);
      \node at (9,2) {$\cdots$};
    \end{tikzpicture}
  \end{center}

  Finally, add a layer on top whose self-attention does nothing and whose FFN computes the minimum of the outputs of the two halves, as in the conjunction case of \cref{thm:lower_lemma1}.
  
  \paragraph{Case $\chi_1 \lor \chi_2$:} Same as the previous case, but compute the maximum instead of the minimum, as in the disjunction case of \cref{thm:lower_lemma1}.
\end{toappendix}

Using this lemma, we get a stack of transformer layers equivalent to $\chi$; call it $\Chi$.

Next, we want to compose $C$ and $\Chi$.
Let $d_C$ and $d_\Chi$ be the width of $C$ and $\Chi$, respectively, and let $d = \max \{d_C, d_\Chi\}$. If $d_C < d$, construct an encoder $Z$ that outputs $\mathbf{0}^{d-d_C}$ and let $C' = Z \oplus C$ and $\Chi'=\Chi$. Similarly if $d_\Chi < d$. Let $T = \Chi' \circ C'$; then $[T(w)]_{d,0}$ is positive iff the whole sentence is true.

Finally, the output layer \cref{eq:output_layer} projects $[T(w)]_{*,0}$ to dimension $d$, so the output probability is greater than $\tfrac12$ iff the whole sentence is true.

\subsection{Relationship to counter machines}
\label{sec:sscm}

\citet{bhattamishra+:2020} define a kind of counter machine called \emph{simplified stateless counter machine} (SSCM). It has zero or more counters, and upon reading each input symbol~$\letter$, it increments or decrements each counter by an amount that depends only on $\letter$. At the end of the input string, the accept/reject decision depends on whether the counters are zero.
Then they prove that any SSCM can be converted to an equivalent transformer. Since our definition of transformers and how they accept strings are slightly different from theirs, we give a slightly different definition of SSCM from theirs. We discuss these differences at the end of this section.

\begin{definition}
  A \emph{simplified stateless $k$-counter machine} \citep{merrill:2020,bhattamishra+:2020}, or $k$-SSCM, is a tuple $(\alphabet, u, F)$ where
  \begin{itemize}
  \item $\alphabet$ is a finite alphabet
  \item $u \colon \alphabet \rightarrow \mathbb{Z}^k$ is a counter update function
  \item $F \subseteq \{0, 1\}^k$ is an acceptance mask.
  \end{itemize}
\end{definition}

\begin{definition}
  Let $M$ be a $k$-SSCM, and let $w = w_1 \cdots w_n$ be an input string. We say that $M$ \emph{accepts} $w$ if there is a sequence $c_0, \dots, c_n \in \mathbb{Z}^k$ such that
  \begin{itemize}
  \item $c_0 = \mathbf{0}$
  \item $c_i = c_{i-1} + u(w_i)$ for all $i \in [1,n]$
  \item $[c_n]_i = 0$ iff $F_i = 0$.
  \end{itemize}
  We say that $M$ \emph{recognizes} a language $L$ if $L = \{ w \in \alphabet^* \mid \text{$M$ accepts $w$}\}$.
\end{definition}

\begin{proposition} \label{thm:counter}
  For any simplified stateless $k$-counter machine \citep{bhattamishra+:2020} there is a sentence of \logic{} that defines the same language.
\end{proposition}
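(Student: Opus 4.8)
The plan is to write the final value of each counter as a linear function of the numbers of occurrences of the input symbols, to introduce one count variable per symbol that names those occurrence counts, and then to encode the acceptance mask $F$ as a conjunction of linear equalities and disequalities over those variables. Concretely, after reading $w = w_1 \cdots w_n$ the value of counter $j$ is
\[ [c_n]_j = \sum_{i=1}^n [u(w_i)]_j = \sum_{\letter \in \alphabet} [u(\letter)]_j \cdot \#_\letter(w), \]
where $\#_\letter(w)$ is the number of positions of $w$ labelled $\letter$. These quantities $\#_\letter(w)$ are exactly what counting quantifiers can name, so the translation will be short.

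First I would introduce, for each $\letter \in \alphabet$, a fresh count variable $x_\letter$ together with the subformula $\counteq{x_\letter} p.\, \isletter{\letter}{p}$, which holds precisely when $x_\letter = \#_\letter(w)$. Although count variables range over the rationals, this causes no difficulty: the occurrence count is a legitimate rational witness, and the counting quantifier pins $x_\letter$ down to that single value. Next, for each counter $j \in [1,k]$, let $t_j$ be the term $\sum_{\letter \in \alphabet} [u(\letter)]_j \cdot x_\letter$, which is a legal term of \logic{} since each $[u(\letter)]_j$ is an integer. The condition ``$[c_n]_j = 0$ iff $F_j = 0$'' is then captured by a quantifier-free formula $\theta_j$, namely $t_j = 0$ when $F_j = 0$ and $\neg(t_j = 0)$ when $F_j = 1$ (the disequality being available since \logic{} has negation, or equivalently expressible as $t_j < 0 \lor 0 < t_j$).

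Finally I would assemble the sentence
\[ \sentence \;=\; \exists x_{\letter_1}. \cdots \exists x_{\letter_{|\alphabet|}}.\, \Bigl( \bigwedge_{\letter \in \alphabet} \counteq{x_\letter} p.\, \isletter{\letter}{p} \;\land\; \bigwedge_{j=1}^k \theta_j \Bigr) \]
and check that $w \models \sentence$ iff $M$ accepts $w$: the witnesses $x_\letter$ are forced to equal $\#_\letter(w)$, so $\sentence$ holds iff $[c_n]_j = 0 \Leftrightarrow F_j = 0$ for every $j$. The edge cases are immediate (the empty input has all counts $0$ and is accepted iff every $F_j = 0$; when $k = 0$ the right conjunction is empty and every string is accepted). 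There is essentially no hard step here; the only points deserving a sentence of care are the rational-versus-integer issue noted above and the expressibility of disequality. Incidentally, $\sentence$ is already in the normal form of \cref{thm:normal_form}.
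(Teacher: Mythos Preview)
Your proof is correct and essentially identical to the paper's: both introduce one count variable per alphabet symbol bound by $\counteq{x_\letter}p.\,\isletter{\letter}{p}$, rewrite each final counter value as the linear combination $\sum_\letter [u(\letter)]_j\,x_\letter$, and encode the acceptance mask as a conjunction of equalities and disequalities on those terms. Your additional remarks on the rational-versus-integer issue, expressibility of disequality, edge cases, and conformity to the normal form are accurate and welcome elaborations.
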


\begin{proof}
  Let $\alphabet = \{\letter_1, \dots, \letter_m\}$, and
  let $M = (\alphabet, u, F)$ be a $k$-SSCM.
  Then the following sentence of \logic{} is equivalent to~$M$:
  \begin{align*}
    \sentence &= \exists x_1. \dots \exists x_m. \left( \bigwedge_{j=1}^m \counteq{x_j} p. \isletter{\letter_j}{p} \land \bigwedge_{i=1}^k \formula_i \right) \\
    \formula_i &= \begin{cases}
      u(\letter_1) x_1 + \dots + u(\letter_m) x_m = 0 & F_i = 0 \\
      u(\letter_1) x_1 + \dots + u(\letter_m) x_m \neq 0 & F_i = 1.
    \end{cases}
    \tag*{\qedhere}
  \end{align*}
\end{proof}

On the other hand, \logic{} can define languages that SSCMs cannot, making it a tighter lower bound than \citeauthor{bhattamishra+:2020}'s.
\begin{proposition} \label{thm:sscm}
  The language {\normalfont $(\texttt{01})^*$} is definable in \logic{} but cannot be recognized by any SSCM.
\end{proposition}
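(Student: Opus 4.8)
The plan is to establish the two halves of the claim separately, and both are short.

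For definability in \logic, I would exhibit the sentence
\begin{align*}
\sentence = {}& \forall p.\,\bigl((\congruent{1}{2}{p} \rightarrow \isletter{\texttt{0}}{p}) \land (\congruent{0}{2}{p} \rightarrow \isletter{\texttt{1}}{p})\bigr) \\
& {}\land \exists x.\,\bigl(\counteq{x}p.\,\isletter{\texttt{0}}{p} \land \counteq{x}p.\,\isletter{\texttt{1}}{p}\bigr),
\end{align*}
and verify that $w \models \sentence$ iff $w \in (\texttt{01})^*$. Since positions range over $[1,|w|]$, the first conjunct forces the symbol at every odd position to be \texttt{0} and the symbol at every even position to be \texttt{1}; the second conjunct forces the number of \texttt{0}'s to equal the number of \texttt{1}'s, hence $|w|$ to be even. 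Together these pin down $w = (\texttt{01})^{|w|/2}$, with the empty string covered by taking $x = 0$. Recall from \cref{sec:logic} that $\forall p$ is expressible in \logic, so $\sentence$ is a legitimate sentence.

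For non-recognizability by SSCMs, the key observation is that an SSCM cannot see the order in which input symbols occur. Spelling this out: given a $k$-SSCM $M = (\alphabet, u, F)$ and input $w = w_1\cdots w_n$, the run $c_0, \dots, c_n$ is forced, with final value $c_n = \sum_{i=1}^n u(w_i)$, and by commutativity of addition in $\mathbb{Z}^k$ this equals $\sum_{\letter\in\alphabet} \#_{\letter}(w)\,u(\letter)$, where $\#_{\letter}(w)$ denotes the number of occurrences of $\letter$ in $w$. Since acceptance is determined solely by which coordinates of $c_n$ are zero, $M$ accepts $w$ iff $M$ accepts every rearrangement of $w$.

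Finally I would apply this with $w = \texttt{01}$ and $w' = \texttt{10}$: these are rearrangements of each other, so every SSCM either accepts both or rejects both, whereas $\texttt{01} \in (\texttt{01})^*$ and $\texttt{10}\notin(\texttt{01})^*$. Hence no SSCM recognizes $(\texttt{01})^*$, which together with \cref{thm:counter} shows the inclusion is strict. I do not expect a genuine obstacle here; the only points requiring care are getting the parity and indexing conventions right in $\sentence$ (in particular that position $1$ is the first symbol and must carry \texttt{0}) and confirming that the equal-count conjunct really does \emph{force} even length rather than merely being consistent with it.
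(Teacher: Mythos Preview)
Your proposal is correct and follows essentially the same approach as the paper: a sentence combining modular predicates on positions with an equal-count condition for definability, and permutation-invariance of SSCMs for the impossibility half. The only cosmetic differences are that the paper writes the positional constraint as a single biconditional $\congruent{1}{2}{p} \leftrightarrow \isletter{\texttt{0}}{p}$ rather than two implications, and uses the witnesses $(\texttt{01})^n$ versus $\texttt{0}^n\texttt{1}^n$ where you use the simpler $\texttt{01}$ versus $\texttt{10}$.
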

\begin{proof}
  The language is definable using the sentence \begin{align*}
    \exists x. &(\counteq{x}p. \isletter{\texttt{0}}{p} \land \counteq{x}p. \isletter{\texttt{1}}{p} \\ &\quad \land \forall p. \congruent{1}{2}{p} \leftrightarrow \isletter{\texttt{0}}{p}).
    \end{align*}

  To see that this language cannot be recognized by a SSCM, observe that SSCMs are permutation-invariant. That is, let $M$ be a SSCM. For any string $w = w_1 \cdots w_n$ and any permutation $\pi$ on $[1,n]$, define $\pi(w) = w_{\pi(1)} \cdots w_{\pi(n)}$. Then $M$ accepts $w$ if and only if it accepts $\pi(w)$. Since this means that $M$ cannot distinguish $(\texttt{01})^n$ from $\texttt{0}^n \texttt{1}^n$, it cannot recognize $(\texttt{01})^*$.
\end{proof}

\citeauthor{bhattamishra+:2020}'s definition of transformers differs from ours in two ways.
First, their transformer encoders have so-called causal (or future) masking, in which each position only attends to the position to the left. Ours do not, and indeed it appears that expressing causal masking would require a binary predicate $p < q$, which would break \cref{thm:normal_form}. We leave investigation of transformer encoders with causal masking for future work. Regardless, masking is not required for simulating SSCMs, as our \cref{thm:counter} holds without it.
  
Second, whereas our definition of transformer classifier places the output sigmoid layer over \cls, theirs places it over the last string position.
This is an arbitrary decision, but as a consequence, their definition of SSCM lets the accept/reject decision depend also on the last input symbol.

Consequently, \cref{thm:lower} does not, strictly speaking, improve on their lower bound.
We don't consider this to be a critical issue, however, because it doesn't seem to relate to essential properties of either transformers or counter machines, and it could easily be fixed, for example, by extending the transformer's positional encoding with a flag indicating the end of string and \logic{} with a predicate indicating the end of string.

\section{Related Work}
\label{sec:related}

Neural networks have been studied in relation to propositional logic from the start \citep{mcculloch+pitts:1943}. Much more recently, \citet{barcelo+:2020} relate graph neural networks to first-order logic with threshold counting quantifiers ($\exists^{\geq k}$ where $k$ is a constant) and at most two variables.

To our knowledge, the only attempts to relate transformers to formal logic are that of \citet{merrill+sabharwal:2023} and the present paper.
But there is a substantial literature on the expressivity of transformers, and in the rest of this section, we review some of this work, limiting our attention to results on transformers as recognizers of formal languages.

\subsection{Upper bounds}
\label{sec:related-upper}

Transformer encoders under various restrictions have been shown to fall into various language classes.
We have already discussed the upper bound of \citet{merrill+sabharwal:2023} using fixed-precision numbers in \cref{sec:tc0}, and review a few others here.

\Citet{hao-etal-2022-formal} study transformers with so-called \emph{hard attention},
where each position attends to the position with the highest attention logit. In the case of a tie, the leftmost position wins. They show (generalizing a result by \citet{hahn:2020} on $\textsf{PARITY}$ and the Dyck language with two pairs of brackets) that such transformers recognize languages in non-uniform $\textsf{AC}^0$ (that is, families of Boolean circuits with unlimited fan-in, polynomial size, and constant depth).

\Citet{merrill-etal-2022-saturated} study transformers with \emph{saturated attention} where, in the case of a tie, attention is distributed evenly among the tied positions. Additionally, they assume that all activations are numbers of the form $x/2^y$ where $x$ and $y$ are integers, with certain operations (reciprocal, square root) rounded to the nearest multiple of $1/2^y$. They show that such transformers recognize languages in non-uniform~$\textsf{TC}^0$.
In subsequent work \citep{merrill+sabharwal:2023log}, they show that transformers using full (softmax) attention and numbers with $O(\log n)$ bits recognize languages in logspace-uniform~$\textsf{TC}^0$.

Also worth mentioning is a result by \citet{hahn:2020}; he considers transformer classifiers whose activation functions are Lipschitz continuous (that is, if layer normalization is used, then $\epsilon>0$) and which always accept or reject strings with some (arbitrarily small, but fixed) margin. He shows that such transformer classifiers cannot recognize $\textsf{PARITY}$ or the Dyck language with two pairs of brackets.

All of the above upper bounds require some modification to the definition of transformer. Ours (\cref{thm:upper}) is no exception: although we use full (softmax) attention, we limit numbers to fixed-precision. Relaxing this restriction would break \cref{thm:fixed_function} and is left for future investigation.

\subsection{Lower bounds}
\label{sec:related-lower}

We have discussed the lower bound of \citet{bhattamishra+:2020} already in \cref{sec:sscm}.
The other lower bounds that we are aware of involve extensions of transformers. First, \citet{chiang+cholak:2022} showed that transformers whose PEs include a $p/n'$ component can recognize $\mathsf{PARITY}$.

Second, RASP \citep{weiss+:2021} is a programming language that can be compiled to transformers with saturated attention and several other extensions that appear to increase their expressivity. 
Their attention weights are directly computed from the previous layer, and are not restricted to be dot-products of query and key vectors; this allows compilation of expressions involving binary predicates like $p = q$ or $p < q$.
Their position-wise FFNs are allowed to compute arbitrary functions, the rationale being that they can be approximated by ReLU FFNs by the universal approximation theorem.

In contrast, our lower bound here follows the definition of a transformer encoder fairly strictly; the only departure is allowing the PE to have sine/cosine waves with different frequencies than the original definition.

Third, \citet{perez+:2021} consider transformers with saturated attention and several extensions. But more importantly, their result concerns, not encoders, but encoder--decoders. The encoder reads a string $w$, and the decoder is allowed to run for an arbitrary number of steps before making an accept/reject decision. This makes the model much more powerful: \Citet{perez+:2021} show that it can simulate a Turing machine.

The use of a decoder is of course standard and not an ``extension.'' But our result and theirs are in no way contradictory; they simply concern two very different configurations of transformers. Intuitively, one could liken a transformer encoder to a system that is prompted with a string and must immediately accept or reject, whereas a transformer encoder--decoder could be likened to a system that can ``think step by step'' \citep{kojima+:2022} before generating a final answer.

\section{Discussion}
\label{sec:discussion}

\subsection{Relationship with other complexity classes}

We have already discussed the relationship of \logic{} with $\mathsf{TC}^0$ (\cref{sec:tc0}) and SSCMs (\cref{sec:sscm}). We can further relate \logic{} to the classes of regular languages and uniform $\textsf{AC}^0$:

\begin{proposition} \label{thm:regular}
The class of languages recognizable by \logic{} is not comparable with the class of regular languages.
\end{proposition}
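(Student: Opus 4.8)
The statement is a non-comparability claim, so I would prove two separate non-inclusions. The direction that \logic{} defines some non-regular language is routine: the sentence in item~\ref{item:equal} of the examples, namely $\exists x. \bigl( \counteq{x}p. \isletter{\texttt{0}}{p} \land \counteq{x}p. \isletter{\texttt{1}}{p} \bigr)$, defines the language of binary strings with equally many \texttt{0}'s and \texttt{1}'s, which is not regular (e.g.\ by Myhill--Nerode, since $\texttt{0}^i$ and $\texttt{0}^j$ are inequivalent for $i\neq j$, or by pumping $\texttt{0}^n\texttt{1}^n$). So the work is entirely in exhibiting a regular language that is \emph{not} definable in \logic.

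The plan for that direction is to isolate the obstruction already implicit in the proof of \cref{thm:tc0}: \logic{} can only distinguish positions up to their residue class modulo the moduli that appear in the sentence. Concretely, I would prove an invariance lemma: let $\sentence$ be any sentence of \logic, and let $M$ be the product of all moduli $m$ occurring in atomic subformulas $\congruent{r}{m}{p}$ of $\sentence$ (taking $M=1$ if there are none); if $w'$ is obtained from $w$ by exchanging the symbols at two positions $p_0,q_0\in[1,|w|]$ with $p_0 \cong{M} q_0$, then $w\models\sentence$ iff $w'\models\sentence$. I would prove this by induction on subformulas, carrying along the involution $\pi$ of $[1,|w|]$ that transposes $p_0$ and $q_0$ — note $w'_{\pi(i)}=w_i$ and $\pi(i)\cong{M}i$ for all $i$ — and showing that for every subformula $\formula$, every assignment of count values to its free count variables, and every assignment $v_1,\dots$ of positions to its free position variables, $\formula$ holds in $w$ under $(v_1,\dots)$ iff it holds in $w'$ under $(\pi(v_1),\dots)$. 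The atomic case $\isletter{\letter}{p}$ uses $w'_{\pi(i)}=w_i$; the atomic case $\congruent{r}{m}{p}$ uses $m\mid M$ together with $\pi(i)\cong{M}i$; count-comparison atoms and the Boolean and $\exists x/\forall x$ cases are immediate since $\pi$ does not touch count values; and the counting-quantifier case $\counteq{x}p.\formula$ goes through precisely because $\pi$ is a \emph{bijection} of $[1,|w|]$, so it maps the set of witnesses of $\formula$ in $w$ bijectively onto the set of witnesses in $w'$ and hence preserves their cardinality.

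Granting the lemma, I would take $L=\texttt{0}^*\texttt{1}^*$, which is regular. Suppose some sentence $\sentence$ of \logic{} defined $L$, and let $M$ be as above. Put $w=\texttt{0}^M\texttt{1}^M$ and let $w'$ be the result of exchanging the symbols at positions $M$ and $2M$, which are congruent modulo $M$; the lemma gives $w\models\sentence$ iff $w'\models\sentence$. But $w\in L$ while $w'=\texttt{0}^{M-1}\texttt{1}^M\texttt{0}\notin L$, a contradiction. Hence $L$ is regular but not \logic-definable, and combined with the first paragraph this shows the two classes are incomparable.

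I expect the main obstacle to be stating and proving the invariance lemma cleanly: in particular, setting up the induction so that free position variables are tracked correctly under $\pi$, and handling the counting-quantifier case, which is where the observation that $\pi$ is a bijection of positions does the real work. (One could alternatively prove the lemma only for sentences in the normal form of \cref{thm:normal_form}, where each counting quantifier ranges over a quantifier-free $\psi_i[p]$ depending only on $w_p$ and $p \bmod M$, reducing the statement to the fact that such counts are functions of the ``residue-wise Parikh vector'' of $w$; but the direct induction seems cleaner.) Everything else is bookkeeping.
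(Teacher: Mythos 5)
Your proof is correct and takes essentially the same route as the paper's: one direction uses a counting-definable non-regular language (you use equal numbers of \texttt{0}'s and \texttt{1}'s, the paper uses \textsf{MAJORITY}), and the other shows $\texttt{0}^*\texttt{1}^*$ is undefinable by swapping symbols at positions congruent modulo the product $M$ of all moduli in the sentence, which is exactly the argument the paper imports from \cref{thm:tc0}. Your explicit invariance lemma, with the bijection handling the counting quantifiers, simply spells out the details the paper leaves implicit in the phrase ``cannot distinguish between positions $p$ and $p+M$.''
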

\begin{proof}
The language \textsf{MAJORITY}, containing those strings with more \texttt{1}'s than \texttt{0}'s, is definable in \logic{} 
by the sentence $\exists x. \exists y. \left( \counteq{x}p. \isletter{\texttt{0}}{p} \land \counteq{y}p. \isletter{\texttt{1}}{p} \land x > y \right)$, but is not regular.
On the other hand, $\texttt{0}^* \texttt{1}^*$ is regular but not definable in \logic, by an argument similar to \cref{thm:tc0}.
\end{proof}

\begin{proposition}
The class of languages recognizable by \logic{} is not comparable with uniform $\mathsf{AC}^0$.
\end{proposition}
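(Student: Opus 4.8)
The plan is to exhibit one separating language in each direction, reusing witnesses that already appear in the paper. For the direction ``\logic{} $\not\subseteq$ uniform $\mathsf{AC}^0$'', take $\textsf{MAJORITY}$. It is definable in \logic{} by the sentence displayed in the proof of \cref{thm:regular}, so it is enough to invoke the classical circuit-complexity fact that $\textsf{MAJORITY}$ is not in (non-uniform, hence not in uniform) $\mathsf{AC}^0$ --- this follows from the $\mathsf{AC}^0$ lower bound for $\textsf{PARITY}$, since $\textsf{PARITY}$ is $\mathsf{AC}^0$-reducible to $\textsf{MAJORITY}$. Thus $\textsf{MAJORITY}$ lies in \logic{} but not in uniform $\mathsf{AC}^0$.

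For the direction ``uniform $\mathsf{AC}^0 \not\subseteq$ \logic{}'', I would use $\texttt{0}^*\texttt{1}^*$. That it is \emph{not} definable in \logic{} is already recorded in the proof of \cref{thm:regular} via the same modular-pigeonhole argument as \cref{thm:tc0}: if $\sentence$ were a defining sentence and $M$ the product of all moduli $m$ appearing in atomic formulas $\congruent{r}{m}{p}$ of $\sentence$, then swapping the symbols at positions $1$ and $M+1$ of $\texttt{0}^M\texttt{1}^M$ yields an isomorphic structure, because $1 \cong{m} M+1$ whenever $m \mid M$, contradicting that $\sentence$ separates $\texttt{0}^M\texttt{1}^M$ from $\texttt{1}\texttt{0}^M\texttt{1}^{M-1}$. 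The remaining obligation is to place $\texttt{0}^*\texttt{1}^*$ in uniform $\mathsf{AC}^0$, which is routine: $w \in \texttt{0}^*\texttt{1}^*$ iff there is no pair of positions $i < j$ with $w_i = \texttt{1}$ and $w_j = \texttt{0}$, and this is computed by a depth-two, polynomial-size circuit (an AND over all pairs of an OR over the two bits) whose connection pattern is decidable in logarithmic time; equivalently, $\texttt{0}^*\texttt{1}^*$ is star-free, hence first-order definable with order, hence in DLOGTIME-uniform $\mathsf{AC}^0$ \citep{barrington+:1990}.

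Combining the two halves gives incomparability, so the proof is essentially immediate once the witnesses are chosen. I do not expect a real obstacle here: both external facts ($\textsf{MAJORITY} \notin \mathsf{AC}^0$ and $\texttt{0}^*\texttt{1}^* \in$ uniform $\mathsf{AC}^0$) are standard, and the two \logic-side claims are already established earlier in the paper. The only point to verify carefully is that the explicit circuit for $\texttt{0}^*\texttt{1}^*$ meets exactly the uniformity notion fixed in \cref{sec:tc0} (log-time decidable connections), which it plainly does.
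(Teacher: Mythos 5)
Your proposal is correct and follows essentially the same route as the paper: both directions are witnessed by $\textsf{MAJORITY}$ (definable in \logic{} but outside $\mathsf{AC}^0$ by the classical lower bound) and by $\texttt{0}^*\texttt{1}^*$ (in uniform $\mathsf{AC}^0$ but undefinable in \logic{} by the modular-pigeonhole argument of \cref{thm:tc0}). The only cosmetic difference is that the paper places $\texttt{0}^*\texttt{1}^*$ in uniform $\mathsf{AC}^0$ via the equivalence with $\mathsf{FO{+}BIT}$ rather than an explicit depth-two circuit; both justifications are standard and interchangeable.
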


\begin{proof}
The languages used in the proof of \cref{thm:regular} apply here as well: \textsf{MAJORITY} is not in $\textsf{AC}^0$ \citep{furst+:1984}, but $\texttt{0}^* \texttt{1}^*$ is in uniform $\textsf{AC}^0$, because $\textsf{AC}^0$ is equivalent to \textsf{FO+BIT}, which includes the sentence $\exists p. \forall q. (q < p \rightarrow \isletter{\texttt{0}}{p} \land q \ge p \rightarrow \isletter{\texttt{1}}{p})$.
\end{proof}

\subsection{Transformer variants}

\Cref{sec:related-upper} mentioned several restrictions of transformers, proposed to make finding upper bounds easier. We think these are interesting in their own right, and it would be worthwhile to clarify the relationships among them. One relationship is already implied by \cref{thm:upper,thm:lower}. The translation from fixed-precision transformers to \logic{} to arbitrary-precision transformers produces networks that only use uniform attention, which is a special case of saturated attention. So fixed-precision transformers are at most as powerful as saturated-attention transformers.

Similarly, \cref{sec:related-lower} mentioned several extensions of transformers, and curiously, all three of these previous lower bounds include $p$ or $p/n'$ in their PEs \citep[also cf.][]{yun+:2020}. Intuitively, this gives them the ability to translate between counts and positions, and we think this extension merits further study, both theoretical and experimental.

\subsection{Next steps}

Nonetheless, our ultimate goal is to exactly characterize the expressivity of unrestricted transformers with rational weights.
It might be thought that rational weights would add too much power, since they can store an unbounded amount of information; for example, an RNN with rational weights is equivalent to a Turing machine \citep{siegelmann+sontag:1995}. But this is true only if it is allowed to run for arbitrarily many time steps; if it runs for $n$ time steps, it is intermediate in power between real-time Turing machines and real-time RAM machines \citep{chen+:2017}. 
Since a transformer encoder only has fixed depth $L$, we think it is reasonable to hope for a logic that is both equivalent to rational-weighted transformers and has useful and interesting properties.

\logic{} is a significant step in that direction. By \cref{thm:upper}, we know that it can at least express anything that real-world transformer encoders can, and by \cref{thm:lower}, we know that it does not have any excess expressivity that does anything ``un-transformer-like.'' Crucial to these results is the normal form of \cref{thm:normal_form}.
Just as (with a fixed number of inputs) arbitrary Boolean functions can be expressed as a disjunction of conjunctions, or arbitrary continuous functions can be approximated by a FFN, in a similar way (with variable-length input strings), our normal form allows some fairly complicated properties to be expressed in a very simple form that can be mapped to transformers.
We speculate that a normal form result along the lines of \cref{thm:normal_form} will make an exact characterization of unrestricted transformer encoders possible, and that it will provide insight into how still more complex properties of strings can be computed by transformers.

\nosectionappendix 
\begin{toappendix}
\section{Layer Normalization}
\label{sec:layer_norm}

\emph{Layer normalization} shifts and scales a vector to have some learned mean and standard deviation \citep{ba+:2016}. In this appendix, we give a brief definition of layer normalization as it relates to our other definitions, and describe how to modify the proof of \cref{thm:upper,thm:lower} to take layer normalization into account.

\subsection{Definition}
\label{sec:layer_norm_def}

\begin{definition}
  \emph{Layer normalization} with width $d$ is a function
  \begin{align*}
    \layernorm \colon \R^{d} &\rightarrow \R^{d} \\
    \left[ \layernorm(x) \right]_i &= \gamma_i \frac{x_i - \bar x}{\sqrt{\text{Var}(x)+\epsilon}} + \beta_i \\
    \intertext{where}
    \bar x &= \frac1d \sum_i x_i \\
    \text{Var}(x) &= \frac1d \sum_i (x_i - \bar x)^2
  \end{align*}
  and $\gamma, \beta \in \R^d$ are learned.
  If $A \in \R^{d \times n'}$, we can write $\layernorm(A)$ by analogy with \cref{eq:vectorize}.
\end{definition}
The term $\epsilon$ is added in all implementations we are aware of for numerical stability, although the original definition \citep{ba+:2016} has $\epsilon=0$.

Then the equations for a transformer layer (\cref{def:layer}) are modified to:
\begin{align*}
    \layer \colon \R^{d \times n'} &\rightarrow \R^{d \times n'} \\
    A &\mapsto A'' \ \text{where} \\
    A' &= \layernorm^{(1)}\left(\sum_{h=1}^H \selfatt^{(h)}(A) + A\right) \\
    A'' &= \layernorm^{(2)}\left(\ffnn(A') + A'\right)
\end{align*}
  where $\layernorm^{(1)}$ and $\layernorm^{(2)}$ are layer normalizations.
Although some variants place layer normalization before each layer \citep{wang+:2019,nguyen+salazar:iwslt2019}, we follow the original definition, which places layer normalization after the residual connection.

\subsection{Modified proof of \cref{thm:upper}}
\label{sec:layer_norm_upper}

In \cref{sec:upper}, we justified the use of limited-precision, limited-range numbers by the fact that in a transformer whose activation functions are all Lipschitz continuous, the activations are bounded \citep{hahn:2020}. However, layer normalization with $\epsilon=0$ is not Lipschitz continuous.
Fortunately, we can show that activations are bounded even with $\epsilon=0$.
\begin{proposition}
  For any transformer encoder with layer normalization with $\epsilon = 0$, there exists $\intbits$ such that, for all~$\ell$, $i$, and~$p$, if $A^{(\ell)}_{ip}$ exists then $|A^{(\ell)}_{ip}| < 2^\intbits$.
\end{proposition}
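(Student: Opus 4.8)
The plan is to exploit the fact that the output of a layer normalization always lies in a fixed bounded box determined entirely by its learned parameters $\gamma,\beta$ and the width $d$, regardless of how large its input is. The key elementary inequality is $(x_i - \bar x)^2 \le d\,\text{Var}(x)$, which holds because $\text{Var}(x) = \frac1d\sum_j (x_j-\bar x)^2 \ge \frac1d(x_i-\bar x)^2$. Hence whenever $\text{Var}(x)>0$,
\[ \bigl|[\layernorm(x)]_i\bigr| = \left|\gamma_i\frac{x_i-\bar x}{\sqrt{\text{Var}(x)}} + \beta_i\right| \le |\gamma_i|\sqrt d + |\beta_i|, \]
and when $\text{Var}(x)=0$ (all coordinates of $x$ equal) the output reduces to $\beta_i$ under any standard convention for the resulting $0/0$, again bounded by $|\gamma_i|\sqrt d + |\beta_i|$. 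Thus every layer normalization appearing in the network maps $\R^d$ into $\prod_i[-M_i,M_i]$ with $M_i = |\gamma_i|\sqrt d + |\beta_i|$.

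Given this, no induction on $\ell$ is needed. For $\ell\ge 1$, the modified layer definition of \cref{sec:layer_norm_def} makes $A^{(\ell)}$ the output of the second layer normalization $\layernorm^{(2)}$ of the $\ell$-th layer, so $|A^{(\ell)}_{ip}| \le B_1$, where $B_1$ is the maximum of $|\gamma_i|\sqrt d + |\beta_i|$ over all coordinates $i$ and over the finitely many layer normalizations $\layernorm^{(1)},\layernorm^{(2)}$ occurring in the $L$ layers. For $\ell = 0$, the word embedding takes only $|\alphabet|$ distinct values and each component of a sinusoidal positional encoding lies in $[-1,1]$, so $|A^{(0)}_{ip}| \le B_0 := 1 + \max_{\letter\in\alphabet}\|\text{WE}(\letter)\|_\infty$, independently of $n$, $i$, and $p$. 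Taking $\intbits$ to be any integer with $2^{\intbits} > \max\{B_0,B_1\}$ completes the proof.

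The only point that requires care — and essentially the only way the argument departs from the Lipschitz-continuous case handled in \cref{sec:upper} — is that boundedness survives even though layer normalization with $\epsilon=0$ is not Lipschitz continuous: as $\text{Var}(x)\to 0$, the numerator $x_i-\bar x$ shrinks in lockstep with the denominator $\sqrt{\text{Var}(x)}$, keeping the ratio in $[-\sqrt d,\sqrt d]$. The exact-zero case is a minor technicality that any reasonable convention for $0/0$ (or the presence of an arbitrarily small positive $\epsilon$) disposes of; I expect this to be the only place a referee might want more detail.
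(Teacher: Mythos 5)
Your proof is correct, and its heart is the same inequality the paper uses: from $\text{Var}(x)=\frac1d\sum_j(x_j-\bar x)^2\ge\frac1d(x_i-\bar x)^2$ one gets $\bigl|\tfrac{x_i-\bar x}{\sqrt{\text{Var}(x)}}\bigr|\le\sqrt d$, so layer normalization maps everything into a fixed box determined by $\gamma$, $\beta$, and $d$. Where you diverge is in how you finish. The paper concludes with ``the fact that all other sublayers are continuous implies that all activations are also bounded,'' which implicitly requires an induction: the input activations are bounded, continuous maps send bounded sets to bounded sets, and layer normalization (though discontinuous at $\text{Var}=0$) is bounded outright, so boundedness propagates up the stack. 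You instead observe that in the post-LN architecture of \cref{sec:layer_norm_def}, $A^{(\ell)}$ for $\ell\ge1$ \emph{is} the output of $\layernorm^{(2)}$, so the bound $|\gamma_i|\sqrt d+|\beta_i|$ applies immediately with no induction, and you handle $\ell=0$ separately via the finiteness of the word embedding and the fact that sinusoidal components lie in $[-1,1]$. Your route is slightly more self-contained and makes explicit two points the paper glosses over: the $\text{Var}(x)=0$ case, and the fact that the bound on intermediate (non-LN) quantities is not actually needed for the statement as written, which only concerns the $A^{(\ell)}$. Both arguments are sound; yours trades the paper's one-line appeal to continuity for a direct structural observation about where layer normalization sits in the layer.
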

\begin{proof}
  For any $x \in \R^d$, let $z_i = \frac{x_i - \bar{x}}{\sqrt{\text{Var}(x)}}$. Then
  \begin{align*}
    \frac1{z_i^2} &= \frac {\sum_{j=1}^d (x_j - \bar{x})^2}{d (x_i - \bar{x})^2} \geq \frac1d
  \end{align*}
  so $|z_i| \leq \sqrt{d}$, and layer normalization is bounded. The fact that all other sublayers are continuous implies that all activations are also bounded.
\end{proof}

As for the proof of \cref{thm:upper} itself, defining layer normalization in \logic{} is straightforward by \cref{thm:fixed_function}.

\subsection{Modified proof of \cref{thm:lower}}
\label{sec:layer_norm_lower}

In \cref{sec:lower}, we proved \cref{thm:lower} without layer normalization. Adding layer normalization complicates the construction somewhat.

In the first step (computing the $\psi_i$), we modify the encoding of truth values so that layer normalization has no effect. Instead of representing a truth value or a count using a single activation, we use a pair of activations (following \citet{chiang+cholak:2022}). In the first step (\cref{sec:psi}), we use the pair $(1, 0)$ for true and $(0, 1)$ for false. This makes it possible to guarantee that activation matrices have the following property.

\begin{definition}
  We say that a matrix $A \in \R^{d \times n'}$ has \emph{row-mean} $\mu$ and \emph{row-variance} $\sigma$ if $A_{:,p}$ has mean $\mu$ and variance $\sigma$ for all $p$. Then a function $f \colon \alphabet^* \rightarrow \R^{d \times n'}$ is \emph{self-normalizing} if $f(w)$ has row-mean and row-variance not depending on $w$, and a function $f \colon \R^{d \times n'} \rightarrow \R^{d \times n'}$ is self-normalizing if $f(A)$ has row-mean and row-variance depending only on $A$'s row-mean and row-variance.
\end{definition}

Self-normalization makes it possible to set the parameters of layer normalization so that it has no effect.
\begin{proposition}
  If $f \colon \R^{d \times n'} \rightarrow \R^{d \times n'}$ is self-normalizing then there exist $\beta, \gamma$ such that $\layernorm(f(A); \beta, \gamma) = f(A)$.
\end{proposition}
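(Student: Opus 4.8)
The plan is to choose $\gamma$ and $\beta$ to be the constant vectors that make $\layernorm$ act as the identity on matrices with exactly the column statistics that $f$ produces. First I would record the consequence of $f$ being self-normalizing: there are numbers $\mu$ and $\sigma$ such that every column $[f(A)]_{*,p}$ has mean $\mu$ and variance $\sigma$. In the construction of \cref{thm:lower} these are genuine constants (not depending on the input), because the input map from $\alphabet^*$ is self-normalizing with a fixed row-mean and row-variance, and every self-normalizing sublayer propagates this; this is what pins down the single pair $(\mu,\sigma)$ to which we tailor $\gamma$ and $\beta$.

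Next I would set $\gamma_i := \sqrt{\sigma+\epsilon}$ and $\beta_i := \mu$ for every $i \in [1,d]$, which is permitted since $\gamma,\beta \in \R^d$ are free parameters. Then for any column $y = [f(A)]_{*,p}$, which by the previous step satisfies $\bar y = \mu$ and $\text{Var}(y) = \sigma$, a direct calculation gives
\[ [\layernorm(y;\beta,\gamma)]_i = \gamma_i\,\frac{y_i - \bar y}{\sqrt{\text{Var}(y)+\epsilon}} + \beta_i = \sqrt{\sigma+\epsilon}\cdot\frac{y_i-\mu}{\sqrt{\sigma+\epsilon}} + \mu = y_i. \]
Since this holds columnwise, $\layernorm(f(A);\beta,\gamma) = f(A)$, as required.

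The only point that needs care is that the denominator $\sqrt{\sigma+\epsilon}$ is nonzero. When $\epsilon>0$ this is automatic. When $\epsilon=0$ (the original definition of layer normalization) one needs $\sigma>0$, i.e.\ that $f$ never emits a column all of whose entries are equal. I would argue this holds throughout the modified construction because truth values and counts are represented by pairs of the form $(1,0)$ or $(0,1)$ (following \citet{chiang+cholak:2022}), whose within-column spread is strictly positive, and every self-normalizing gadget used downstream is arranged so as to keep the row-variance strictly positive. Checking this invariant across all the gadgets of the modified construction is the only real obstacle, and it is still fairly routine; the cancellation itself, as shown above, is a one-line computation.
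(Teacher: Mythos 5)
The paper states this proposition without any proof, and your argument---choosing $\gamma_i = \sqrt{\sigma+\epsilon}$ and $\beta_i = \mu$ so that the normalization cancels exactly on columns having the mean $\mu$ and variance $\sigma$ that $f$ is guaranteed to produce---is precisely the intended one-line verification and is correct. You also rightly flag the two points the terse statement glosses over: that $\beta,\gamma$ can only be fixed once the input's row-mean and row-variance (and hence $f$'s output statistics) are pinned down by the surrounding construction, and that when $\epsilon=0$ one needs the row-variance to remain strictly positive, which the paired $(1,0)/(0,1)$ and $(+\delta,-\delta)$ encodings are designed to ensure.
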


It is easy to modify the proof of \cref{thm:lower_lemma1} for the new representation of truth values and to guarantee that the resulting transformer encoder is self-normalizing. In particular, \cref{thm:concat} constructs a self-normalizing $T_1 \oplus T_2$, provided $T_1$ and $T_2$ are self-normalizing.

The second step (computing counting quantifiers) produces values of the form $\frac{x_i}{n'}$, which we now modify to produce pairs of values $(\frac{x_i}{n'}, -\frac{x_i}{n'})$ to guarantee that activation matrices have a row-mean of zero. We can no longer guarantee that activations matrices have known row-variance, so layer normalization will rescale activations.

Consequently, in the third step (\cref{sec:chi}), we use $(+\delta, -\delta)$ for true and $(-\delta, +\delta)$ for false, where $\delta$ can be any positive number. It is easy to show that \cref{thm:lower_lemma2} still holds.

\end{toappendix}

\section*{Acknowledgements}

We thank Brian DuSell and the anonymous reviewers for their comments. Peter Cholak was partially supported by NSF grant DMS-1854136, and Anand Pillay was supported by NSF grants DMS-1760212 and DMS-2054271.

\bibliography{transfol}

\end{document}